\documentclass[]{ceurart}

\usepackage{mathtools}
\usepackage{amsthm}
\usepackage{bm}
\usepackage{subcaption}
\usepackage{epstopdf}
\usepackage{xfrac}
\usepackage[shortlabels]{enumitem}
\usepackage{textcomp}
\usepackage{nicefrac}
\usepackage{adjustbox}
\usepackage{pifont}   

\usepackage{algorithm}
\usepackage{algorithmic}

\usepackage[capitalize,noabbrev]{cleveref}

\graphicspath{ {figures/} }

\theoremstyle{plain}
\newtheorem{theorem}{Theorem}[section]

\newtheorem{corollary}[theorem]{Corollary}
\theoremstyle{definition}

\theoremstyle{remark}
\newtheorem{remark}[theorem]{Remark}

\newcommand{\marshall}[2][]{}

\newcommand{\mD}{\mathcal{D}}

\newcommand{\fsim}{f_{\mathrm{sim}}}

\begin{document}

\copyrightyear{2026}
\copyrightclause{Copyright for this paper by its authors.
  Use permitted under Creative Commons License Attribution 4.0
  International (CC BY 4.0).}

\conference{GenAIECommerce'26: The Third Workshop on Agentic and Generative AI
  for E-Commerce, co-located with RecSys, September 28, 2026,
  Minneapolis, MN, USA}

\title{Efficient Clustering with Quality Guardrails for LLM-based Recommender Systems at Industry Scale}

\author[1]{Longshaokan Wang}[email=longsha@amazon.com]
\author[1]{Wai Tsang Keung}[email=wkeung@amazon.com]
\author[1]{Punit Ghodasara}[email=ghopunit@amazon.com]
\author[1]{Roman Wang}[email=wangrz@amazon.com]
\author[1]{Ali Dashti}[email=dashtia@amazon.com]
\author[1]{Francesc Moreno-Noguer}[email=cescmore@amazon.es]
\address[1]{Amazon}

\begin{abstract}
LLMs can be prohibitively expensive and slow to run at scale, especially for applications that invoke an LLM per sample over millions of inputs. A natural way to scale such applications is to cluster the inputs, run the LLM only on cluster representatives, and propagate the LLM outputs to other cluster members. However, in this setup, the LLM outputs a cluster member receives are only as good as the member's match to the cluster representative. Off-the-shelf clustering methods optimize an aggregate objective over all samples, targeting only average-case quality without offering per-sample quality guardrails. As a result, cluster members can be assigned to poorly-matched representatives, and the inherited LLM outputs — though appropriate for the representative — may be irrelevant or even unsafe for the member. For example, a parent of a toddler grouped with parents of older children could receive age-inappropriate recommendations. Furthermore, most clustering methods scale poorly to millions of inputs in runtime and memory, limiting their applicability at industry scale. We propose a scalable two-stage clustering algorithm with provable per-sample quality guardrails: every sample is guaranteed to share a user-specified minimal similarity and exact attribute match with its cluster representative. The algorithm first generates initial clusters with Mini-batch K-Means, then greedily selects representatives within each initial cluster to satisfy the guardrails. We provide theoretical guarantees, complexity analysis, and benchmarks against common clustering methods on internal and public datasets. We show that our method not only delivers per-sample guardrails but also runs substantially faster and scales to data sizes where most standard methods become intractable. We demonstrate our method's impact in a real-world deployment clustering 38 million customers. We reduce downstream LLM cost and runtime by $50\times$ while preserving personalization, unblocking the launch of a persona-based recommender system that delivers significant gains in revenue and engagement in an A/B test.
\end{abstract}

\begin{keywords}
  recommender systems \sep
  clustering \sep
  quality guardrails \sep
  scalability \sep
  personalization \sep
  eCommerce \sep
  LLMs
\end{keywords}

\maketitle

\section{Introduction}
\label{sec:intro}
Modern recommender systems increasingly rely on Large Language Models (LLMs) to deliver richer personalization — from generating personalized search queries and product descriptions to judging recommendation relevance and extracting customer or item attributes \citep{Li2024,Luo2025,Yao2022,Herold2024,Fang2024,Peng2024}. These applications typically invoke an LLM \emph{per user or per item}: each call is individually inexpensive, but the aggregate cost and latency over millions of users or items can be prohibitive at industry scale \citep{Bommasani2022, Erdil2025, Samsi2023, pope2022efficientlyscalingtransformerinference}. Throughput limits imposed by major LLM service providers like Bedrock further stretch end-to-end runtime into days or months \citep{li2025throughputoptimalschedulingalgorithmsllm, Kwon2023, zhu2025nanoflowoptimallargelanguage}, forcing difficult tradeoffs across LLM-based applications. A fundamental challenge of deploying LLMs in a recommender system, then, is to scale per-sample inference within time and budget constraints while satisfying quality guardrails that protect the customer experience.

We investigate using clustering to address LLMs' scalability challenge with quality guardrails. The general scaling approach is to first cluster the input to LLMs, then run the downstream LLMs only for the representative samples from the clusters (e.g., cluster centroids). Clustering can drastically reduce the downstream budget and time required, but introduces approximation errors by assuming that the downstream output for any sample would be the same as the output for the corresponding cluster representative. To ensure that the output for a representative sample is relevant to all the other samples from the same cluster, it is crucial to measure and impose guardrails on within-cluster similarity during clustering.

In particular, we address a real-world application of generating personalized recommendations from customer shopping personas with LLMs. At a high level, the recommender system would consume the customer shopping personas derived from interaction histories, generate personalized search queries from the personas with an LLM, retrieve the corresponding products with a Query-to-Product service, and filter the retrieved products with a fine-tuned LLM-as-a-Judge called Marketing Critic (Figure \ref{fig:crm}). Before clustering is introduced, to generate recommendations for 38 million customers required for the initial launch, the query generation step would take \$114,000 and 23 days, and the Marketing Critic step would take \$1,018,500 and 485 days with our allocated computation resources. Our goal is to cluster the customer personas and estimated household attributes before the query generation step to reduce the downstream budget and time required while protecting the quality of the final recommendations. The quality guardrails are particularly important in eCommerce and can be viewed as safety constraints, because irrelevant product recommendations can damage customer trust and in the worst case cause safety hazards (e.g., recommending age-inappropriate toys to young children).

\begin{figure}[t]
  \centering
  \includegraphics[width=1.0\textwidth]{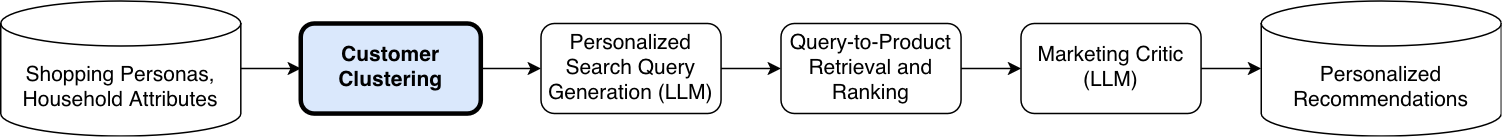}
  \caption{High-level overview of the personalized recommendation pipeline.}
  \label{fig:crm}
\end{figure}

As the recommendations are personalized to the customer personas and attributes like gender and child presence, and we need to generate the recommendations at a large scale with periodic refreshes, we adopt the following requirements when researching the appropriate clustering algorithm to account for both quality guardrails and scalability:
\begin{enumerate}
    \item The semantic similarity between any customer persona in a cluster and the representative persona of the cluster exceeds a user-specified threshold.
    \item All customers in a cluster share exactly the same user-specified attributes.
    \item The number of clusters is meaningfully less than the number of customers (a minimal 10-fold reduction required in our target use case).
    \item The clustering algorithm scales well to large datasets (e.g., 38 million customers) regarding runtime, memory, and cost. The runtime and cost of clustering are significantly less than those of the downstream computation saved due to clustering.
\end{enumerate}
To our knowledge, there is no existing clustering algorithm that satisfies all the above requirements. We propose Scalable Guardrailed Clustering (SGC), a novel two-stage clustering algorithm with greedy cluster representative selection that meets all the requirements: We use Mini-batch K-Means \citep{sculley2010minibatchkmeans} to generate the initial clusters, compute pairwise semantic similarity within each initial cluster, then further partition the initial clusters based on the user-specified semantic similarity threshold and customer attributes as guardrails via an iterative greedy selection strategy to obtain the final clusters.

We review the related work in Section~\ref{sec:related}; describe the proposed clustering algorithm in Section~\ref{sec:method}; benchmark on internal and public datasets in Section~\ref{sec:exp}; show the application on the persona-based personalized recommendation use case in Section~\ref{sec:app}; and summarize and discuss future work in Section~\ref{sec:disc}.

The contributions of this paper are as follows:
\begin{enumerate}
    \item We propose a simple and scalable clustering algorithm that guarantees a user-specified minimal semantic similarity between each sample and its cluster representative and exact matching of user-specified attributes as quality guardrails to protect the customer experience.
    \item We show that the proposed greedy selection step enables effective trimming of tail clusters to further improve data reduction at the tradeoff of removing a small percentage of samples.
    \item We benchmark the proposed method against common clustering methods on both internal and public datasets.
    \item We demonstrate how the proposed clustering algorithm applies to the use case of persona-based personalized recommendations and achieves a 50-fold reduction in downstream computation time and budget to unblock the launch.
\end{enumerate}


\section{Related Work}
\label{sec:related}
Existing clustering methods fail to address all the requirements of Section \ref{sec:intro} simultaneously. Most methods, such as K-Means \citep{macqueen1967multivariate}, BIRCH \citep{zhang1996birch}, and Gaussian Mixture \citep{dempster1977gmm}, optimize an aggregate objective and cannot guarantee a minimal similarity between each sample and its representative or exact attribute matching. For our use case, such mismatches can surface wrong personalized recommendations and hurt the customer experience. Fuzzy hashing methods like MinHash \citep{broder1997minhash} rely on lexical similarity, which poorly captures semantic similarity and cannot handle paraphrases.

The clustering methods most related to ours are \emph{threshold-based}: they impose a user-specified lower bound on within-cluster similarity. Star Clustering \citep{aslam1998static}, SimClus \citep{alhasan2011}, and the Community Detection routine in SentenceTransformers \citep{reimers-2019-sentence-bert} all select representatives on a thresholded similarity graph, but each computes the full $O(n^2)$ pairwise similarity matrix over the entire dataset and thus does not scale to sample size $n \sim 10^7$ (Section \ref{sec:complexity}), our central bottleneck. They also have method-specific drawbacks: Star Clustering's degree-based selection yields substantially more clusters (weaker data reduction) than coverage-based selection, and Community Detection does not strictly guarantee the threshold. SimClus is the closest: our method can be viewed as an extension of the greedy \textsc{Set Cover} selection \citep{johnson1974setcover,chvatal1979setcover} of SimClus by adding an initial round of clustering for scalability, exact attribute matching, and an optional reassignment step to improve average within-cluster similarity. Finally, SemDedup \citep{abbas2023semdedupdataefficientlearningwebscale} shares our cluster-then-compare structure but is a \emph{deduplication} method that discards near-duplicates without retaining a sample-to-representative mapping required for output propagation from each cluster representative to its members. We provide an extended comparison of all four methods and a discussion of complementary LLM-scaling approaches in Appendix \ref{app:related-extended}.

\section{Method}
\label{sec:method}
We formulate the clustering problem, present the main algorithm, formalize the theoretical guarantees, provide the computational complexity analysis, then share the key technical details and design considerations.

\subsection{Problem Formulation}
Let $\mD = \{(P_i, \mathbf{A}_i)\}_{i=1, 2, \ldots, n}$ be the input data to clustering from $n$ customers, where $P_i$ and $\mathbf{A}_i$ are the shopping personas and attributes of customer $i$ respectively. Let $f_\text{emb}$ be an embedding model that converts a textual persona $P_i$ into a numeric vector $\mathbf{E}_i$. Let $f_\text{sim}$ be the Cosine similarity function. The goal of clustering is to learn a mapping from customer data to cluster ID and find a representative sample for each cluster. Equivalently, the goal is to learn a mapping from customer data to a representative customer ID: $\widehat{f}_\text{cluster}: \mD \rightarrow \{1, 2, \ldots, n\}$. The minimal similarity and attribute matching requirements can be defined as: $\widehat{f}_\text{cluster}(P_i, \mathbf{A}_i) = j$ only if $f_\text{sim}\left(f_\text{emb}(P_i), f_\text{emb}(P_j)\right) \geq \alpha$ for a user-specified threshold $\alpha$ and $\mathbf{A}_i = \mathbf{A}_j$. Note that here we formulate the clustering problem for the personalized recommendation use case without loss of generality --- $\mathbf{E}_i$ can be replaced with any feature vector and $f_\text{sim}$ can be replaced with any distance or similarity function.

\subsection{Algorithm}
\label{sec:algo}

\begin{figure}[t!]
  \centering
  \includegraphics[width=1.0\textwidth, trim={0 0cm 0 0cm}]{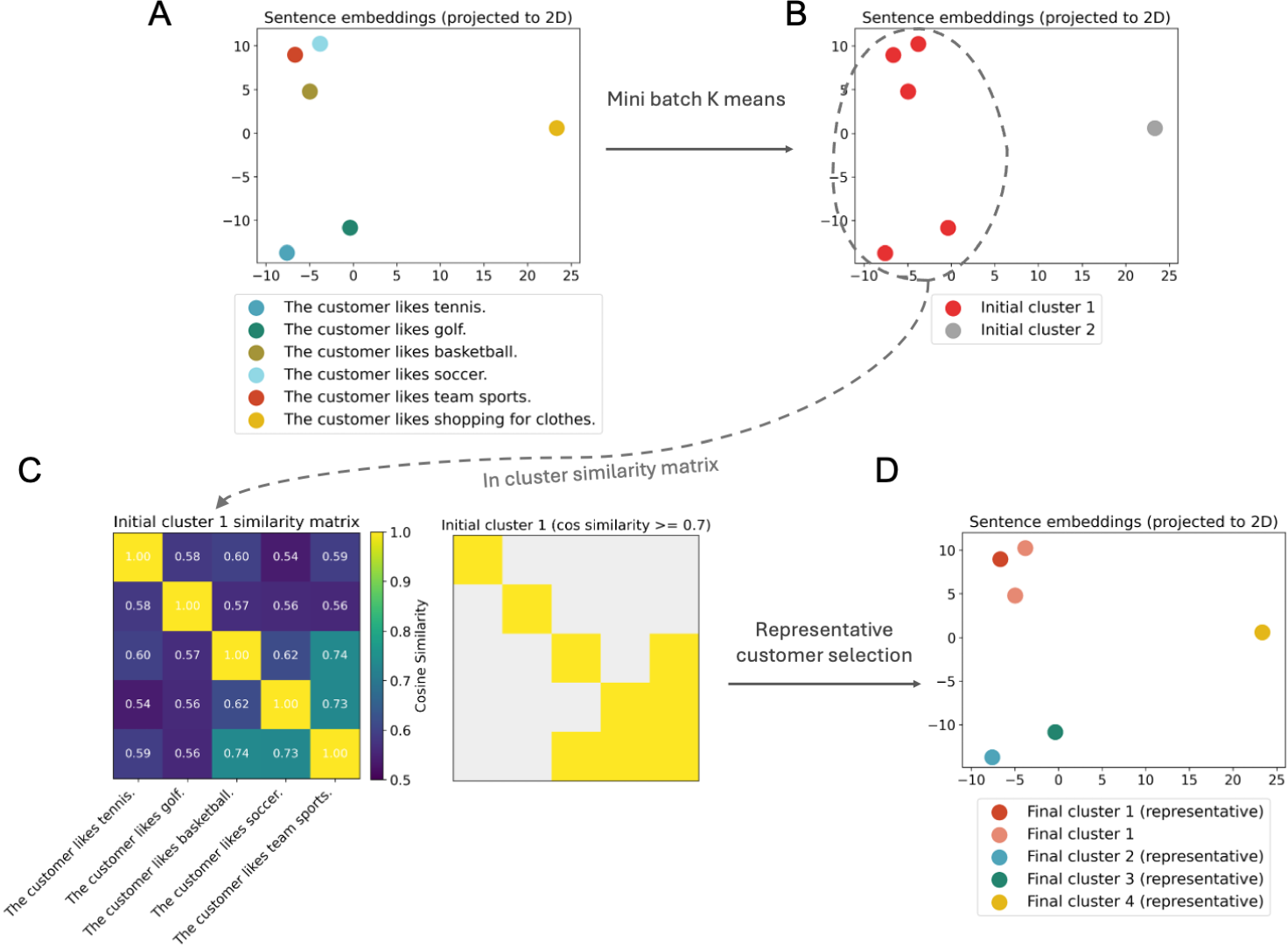}
  \caption{A. Visualization of the embeddings of six sentences projected to a 2D plot. B. The six embeddings are clustered into two initial clusters by Mini-batch K-Means. C. Cosine similarity matrix of initial cluster 1 (left) and after filtering with a similarity threshold of 0.7 (right). ``The customer likes team sports'' has the highest number of matches and is selected as the representative. D. The final clusters.}
  \label{fig:walkthrough}
\end{figure}

Scalable Guardrailed Clustering (SGC) proceeds in two stages. In the first stage, we perform an initial round of clustering on the persona embeddings with Mini-batch K-Means; in the second stage, we compute pairwise similarity and match matrices within each initial cluster and iteratively select the customer with the most matches as a cluster representative. The detailed steps are listed below and illustrated in Figure \ref{fig:walkthrough}, and the pseudocode is provided in Algorithm \ref{algo1}:
\begin{enumerate}
    \item Convert all customer personas to embeddings with a language model (Figure \ref{fig:walkthrough}A).
    \item Generate the \textsl{initial} clusters of the embeddings with Mini-batch K-Means (Figure \ref{fig:walkthrough}B). The subsequent Steps 3-6 are performed within each initial cluster.
    \item Within an initial cluster, compute the pairwise Cosine similarity among all persona embeddings (Figure \ref{fig:walkthrough}C).
    \item We consider 2 customers to be a ``match'' if the Cosine similarity of their persona embeddings exceeds a user-specified threshold (Figure \ref{fig:walkthrough}C) and they share the same user-specified attributes (e.g., household compositions). Find the customer that ``matches'' with the largest number of customers.
    \item Set the customer with the most matches from Step 4 to be the ``representative'' of all its matched customers. The matched customers are considered to be in the same final cluster.
     \item For the remaining unmatched customers, repeat Steps 4-5 until all customers are matched with a representative customer. This effectively further partitions an initial cluster from Step 2 into final clusters defined by the representative customers.
     \item Repeat Steps 3-6 for each initial cluster to generate the final clusters for all customers (Figure \ref{fig:walkthrough}D).
\end{enumerate}

\begin{algorithm*}[t]
\caption{Two-Stage Clustering with Greedy Representative Selection (SGC)}
\label{algo1}
\begin{algorithmic}[1]
\REQUIRE Dataset $\mathcal{D} = \{(P_i, \mathbf{A}_i)\}_{i=1}^{n}$, embedding function $f_\text{emb}$, similarity function $f_\text{sim}$, similarity threshold $\alpha$, number of initial clusters $K$.
\ENSURE Cluster assignment $\widehat{f}_\text{cluster}: \mathcal{D} \rightarrow \{1, 2, \ldots, n\}$; data reduction $|\widehat{f}_\text{cluster}(\mathcal{D})| < |\mathcal{D}|$; minimal similarity $\widehat{f}_\text{cluster}(P_i, \mathbf{A}_i) = j \Rightarrow f_\text{sim}\left(f_\text{emb}(P_i), f_\text{emb}(P_j)\right) \geq \alpha$; attribute matching $\widehat{f}_\text{cluster}(P_i, \mathbf{A}_i) = j \Rightarrow \mathbf{A}_i = \mathbf{A}_j$.

\STATE \textbf{Stage 1: Initial Clustering}
\STATE $\mathbf{E}_i \leftarrow f_\text{emb}(P_i) \; \forall \; i \in \{1, 2, \ldots, n\}$ \COMMENT{Convert personas to embeddings}
\STATE $\mathcal{C}_\text{init} \leftarrow \text{MiniBatchKMeans}(\{\mathbf{E}_i\}_{i=1}^{n}, K)$ \COMMENT{Generate initial clusters}

\STATE \textbf{Stage 2: Representative Customer Selection}
\FOR{each initial cluster $C_k \in \mathcal{C}_\text{init}$}
    \STATE $S_{ij} \leftarrow f_\text{sim}(\mathbf{E}_i, \mathbf{E}_j) \; \forall \; i, j \in C_k$ \COMMENT{Compute pairwise similarities within $C_k$}
    \STATE $M_{ij} \leftarrow \mathbf{I}\{S_{ij} \geq \alpha \; \& \; \mathbf{A}_i = \mathbf{A}_j\} \; \forall \; i, j \in C_k$ \COMMENT{Compute pairwise matches within $C_k$}
    \STATE $\mathcal{U} \leftarrow C_k$ \COMMENT{Set of unmatched customers}
    \STATE $\mathcal{R} \leftarrow \emptyset$ \COMMENT{Set of representative customers}

    \WHILE{$\mathcal{U} \neq \emptyset$}
        \STATE $\mathcal{M}_i \leftarrow \{j \in \mathcal{U} : M_{ij} = 1\} \; \forall \; i \in C_k$ \COMMENT{Find matches from the unmatched customers}
        \STATE $r^* \leftarrow \arg\max_{i \in C_k} |\mathcal{M}_i|$ \COMMENT{Customer with the most matches}
        \STATE $\mathcal{R} \leftarrow \mathcal{R} \cup \{r^*\}$ \COMMENT{Add to representatives}
        \STATE $\widehat{f}_\text{cluster}(P_i, \mathbf{A}_i) \leftarrow r^* \; \forall \; i \in \mathcal{M}_{r^*}$ \COMMENT{Assign matched customers to the representative}
        \STATE $\mathcal{U} \leftarrow \mathcal{U} \setminus \mathcal{M}_{r^*}$ \COMMENT{Remove matched customers}
    \ENDWHILE
\ENDFOR
\RETURN $\widehat{f}_\text{cluster}$
\end{algorithmic}
\end{algorithm*}

Note that the iterative greedy selection of representatives based on the largest match count will always identify the largest final cluster possible for the remaining unmatched customers within an initial cluster. Consequently, the cluster size distribution of the final clusters is heavily skewed with a long tail by design. The skewed cluster size distribution is an intended benefit of the proposed algorithm --- Section \ref{sec:cluster_size} shows that we can remove a large percentage of the tail clusters at the cost of removing a small percentage of customers, further improving the effectiveness of data size reduction; we can also focus on the largest clusters that cover the majority of customers during quality assurance.

One drawback of the greedy representative selection is that a customer can get matched to a representative customer early in the process while having a higher similarity with a later selected representative customer. To further improve the average within-cluster similarity, we propose a variation, SGC with reassignment (SGC-R), applied after Step 6: While keeping the selected representative customers in an initial cluster fixed, we reassign each customer to the representative customer with the highest similarity and matching attributes within that initial cluster (Algorithm~\ref{algo2}). This can be done efficiently with minimal additional latency as we have already computed the similarity and match matrices in Steps 3-4. This variation with reassignment improves within-cluster similarity at the tradeoff of making the cluster size distribution less skewed and thus tail cluster trimming less effective compared to the original algorithm,
as shown in Sections \ref{sec:sim} and \ref{sec:cluster_size}.
Importantly, both SGC and SGC-R satisfy all the desired requirements listed in Section \ref{sec:intro}. More design considerations and technical details are discussed in Section \ref{app:design}.

\begin{algorithm*}[t]
\caption{Reassignment Step (optionally applied within Algorithm \ref{algo1}, Stage 2, for each initial cluster)}
\label{algo2}
\begin{algorithmic}[1]
\REQUIRE Initial cluster $C_k$, representative set $\mathcal{R} \subseteq C_k$, similarity matrix $\{S_{ij}\}_{i,j \in C_k}$, match matrix $\{M_{ij}\}_{i,j \in C_k}$, cluster assignment $\widehat{f}_\text{cluster}$ from Algorithm \ref{algo1}.
\ENSURE Updated cluster assignment $\widehat{f}_\text{cluster}$ with improved average within-cluster similarity; minimal similarity and attribute matching guarantees preserved.

\FOR{each customer $i \in C_k$}
    \STATE $r^* \leftarrow \arg\max_{r \in \mathcal{R} : M_{ir} = 1} S_{ir}$ \COMMENT{Find the best-matching representative}
    \IF{$r^*$ exists}
        \STATE $\widehat{f}_\text{cluster}(P_i, \mathbf{A}_i) \leftarrow r^*$ \COMMENT{Reassign customer to the most similar representative}
    \ENDIF
\ENDFOR
\RETURN $\widehat{f}_\text{cluster}$
\end{algorithmic}
\end{algorithm*}

\subsection{Theoretical Guarantees}
\label{sec:theory}

We formalize the guardrail property and prove that both algorithm
variants satisfy it. Define the \emph{match relation}
$i \sim_\alpha j$ to hold iff $\fsim(\mathbf{E}_i, \mathbf{E}_j) \geq \alpha$
and $\mathbf{A}_i = \mathbf{A}_j$, and let the \emph{$\alpha$-ball} of
$i$ be $\mathcal{B}_\alpha(i) = \{\, j : i \sim_\alpha j \,\}$. The
relation is reflexive (since $\fsim(\mathbf{E}_i, \mathbf{E}_i) = 1$) and
symmetric, but not transitive in general. A cluster assignment
$\widehat{f}_{\text{cluster}}$ with representative set $\mathcal{R}$
\emph{satisfies the guardrail property} if
$\widehat{f}_{\text{cluster}}(P_i, \mathbf{A}_i) = r$ implies
$r \in \mathcal{R}$ and $i \in \mathcal{B}_\alpha(r)$ --- i.e., every
customer is assigned to a representative whose $\alpha$-ball covers
them. This is exactly Requirements~1 and~2 of Section~\ref{sec:intro}.

Within each initial cluster $C_k$, Stage~2 of Algorithm~\ref{algo1} is
the classical Johnson--Chv\'atal greedy heuristic for \textsc{Set Cover}
\citep{johnson1974setcover,chvatal1979setcover} applied to the universe
$C_k$ with candidate sets $\{\mathcal{B}_\alpha(i) \cap C_k\}_{i \in C_k}$:
at each iteration it picks the customer covering the most still-uncovered
elements and removes them.

\begin{theorem}[Guardrail guarantee]
\label{thm:guardrail}
For any input $\mathcal{D}$, any symmetric similarity function
$\fsim$ with $\fsim(x,x) \geq \alpha$, any threshold $\alpha$,
and any number of initial clusters $K \geq 1$:
\begin{enumerate}[label=(\roman*)]
  \item The assignment returned by Algorithm~\ref{algo1} satisfies the
  guardrail property.
  \item Applying Algorithm~\ref{algo2} after Algorithm~\ref{algo1}
  preserves the guardrail property and, for every customer $i$, weakly
  increases the similarity to their assigned representative:
  $\fsim(\mathbf{E}_i, \mathbf{E}_{r^{\mathrm{new}}}) \geq
   \fsim(\mathbf{E}_i, \mathbf{E}_{r^{\mathrm{old}}})$.
\end{enumerate}
\end{theorem}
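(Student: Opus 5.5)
The plan is to prove (i) by a loop invariant on the while-loop of Algorithm~\ref{algo1}, treating each initial cluster $C_k$ separately; the initial clusters from Stage~1 partition the $n$ customers, whatever $K$ is. Fix $C_k$. We maintain the following invariant: every customer in $C_k\setminus\mathcal{U}$ has been assigned a representative $r\in\mathcal{R}$ with $i\in\mathcal{B}_\alpha(r)$, and customers in $\mathcal{U}$ are unassigned.

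At an iteration, the assignment step only touches $i\in\mathcal{M}_{r^*}$. These are customers with $M_{r^*i}=1$, i.e.\ $\fsim(\mathbf{E}_{r^*},\mathbf{E}_i)\ge\alpha$ and $\mathbf{A}_{r^*}=\mathbf{A}_i$. Using symmetry of $\fsim$, this is exactly $i\in\mathcal{B}_\alpha(r^*)$. Also, $r^*$ is added to $\mathcal{R}$ in the same iteration. Each such $i$ lies in $\mathcal{U}$, so it is assigned exactly once and never overwritten later, and the invariant is preserved.

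\textbf{Termination (main obstacle).} We must also show that the loop terminates with $\mathcal{U}=\emptyset$, so that every customer is assigned. While $\mathcal{U}\neq\emptyset$, pick any $u\in\mathcal{U}$. The hypothesis $\fsim(x,x)\ge\alpha$ together with $\mathbf{A}_u=\mathbf{A}_u$ gives $u\in\mathcal{M}_u$, so $\max_i|\mathcal{M}_i|\ge 1$. Hence the argmax $r^*$ satisfies $|\mathcal{M}_{r^*}|\ge1$, and $|\mathcal{U}|$ strictly decreases. This bounds the number of iterations by $|C_k|$.

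One subtlety needs a remark. Algorithm~\ref{algo1} takes the argmax over $C_k$, so $r^*$ may already be matched. This is harmless: the guardrail property only requires $r^*\in\mathcal{R}$, not that $r^*$ be assigned to itself. Taking the union over $k$ then gives (i).

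For (ii), fix $i$ with old representative $r^{\mathrm{old}}$, obtained from (i) within $i$'s own $C_k$. We know $r^{\mathrm{old}}\in\mathcal{R}$ and $M_{ir^{\mathrm{old}}}=1$ by (i) and symmetry. Therefore the feasible set $\{r\in\mathcal{R}: M_{ir}=1\}$ in Algorithm~\ref{algo2} is nonempty and finite, and $r^{\mathrm{new}}$ exists.

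Since $r^{\mathrm{new}}$ is chosen from that set, $M_{ir^{\mathrm{new}}}=1$, i.e.\ $i\in\mathcal{B}_\alpha(r^{\mathrm{new}})$ with $r^{\mathrm{new}}\in\mathcal{R}$. Because $\mathcal{R}$ is unchanged, the guardrail property is preserved. Finally, $r^{\mathrm{old}}$ is feasible and $r^{\mathrm{new}}$ maximizes $S_{ir}$ over the feasible set, so $\fsim(\mathbf{E}_i,\mathbf{E}_{r^{\mathrm{new}}})\ge\fsim(\mathbf{E}_i,\mathbf{E}_{r^{\mathrm{old}}})$.

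The reassignment loop in Algorithm~\ref{algo2} reads only the fixed matrices $S$, $M$ and the fixed set $\mathcal{R}$, never the current assignment. So processing customers sequentially creates no interaction between them, and the per-customer argument suffices.
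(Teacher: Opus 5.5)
Your proof is correct and follows essentially the same route as the paper's. Customers assigned in an iteration lie in $\mathcal{M}_{r^*}\subseteq\mathcal{B}_\alpha(r^*)$ and are never overwritten, the loop terminates, and the Stage~1 partition extends the result to all customers; for (ii), $r^{\mathrm{old}}$ is a feasible candidate in the $\arg\max$.

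Your termination step is actually more careful than the paper's. The paper asserts $r^{*(t)}\in\mathcal{M}^{(t)}_{r^{*(t)}}$ by reflexivity, but the $\arg\max$ ranges over all of $C_k$, so $r^{*(t)}$ may already be matched. Your argument through an arbitrary $u\in\mathcal{U}$ with $u\in\mathcal{M}_u$ closes exactly that gap.
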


\begin{proof}[Proof sketch]
\textbf{(i)} Fix any initial cluster $C_k$, and let $\mathcal{U}^{(t)}$
denote the unmatched set at iteration $t$ of Stage~2. By the definition
$M_{ij} = \mathbf{1}\{S_{ij} \geq \alpha \text{ and } \mathbf{A}_i = \mathbf{A}_j\}$
(line~7 of Algorithm~\ref{algo1}),
$\mathcal{M}_i^{(t)} = \mathcal{B}_\alpha(i) \cap \mathcal{U}^{(t)}$.
Every customer $j$ assigned to $r^{*(t)}$ in line~14 lies in
$\mathcal{M}_{r^{*(t)}}^{(t)} \subseteq \mathcal{B}_\alpha(r^{*(t)})$,
so $\fsim(\mathbf{E}_j, \mathbf{E}_{r^{*(t)}}) \geq \alpha$ and
$\mathbf{A}_j = \mathbf{A}_{r^{*(t)}}$. Once assigned, $j$ is removed
from $\mathcal{U}$ (line~15) and never reassigned. Reflexivity ensures
$r^{*(t)} \in \mathcal{M}_{r^{*(t)}}^{(t)}$, so at least one element is
removed per iteration and the loop terminates in at most $|C_k|$
iterations with every $i \in C_k$ assigned to some
$r \in \mathcal{B}_\alpha(i) \cap C_k$. Stage~1 partitions
$\{1,\ldots,n\}$ into $\{C_k\}$ and Stage~2 is applied independently
within each, so the guarantee extends to all of $\mathcal{D}$.

\textbf{(ii)} Let $\mathcal{R}_k$ be the representatives selected by
Algorithm~\ref{algo1} within $C_k$. By (i), the original representative
$r^{\mathrm{old}} \in \mathcal{R}_k$ satisfies
$i \in \mathcal{B}_\alpha(r^{\mathrm{old}})$, equivalently
$r^{\mathrm{old}} \in \mathcal{B}_\alpha(i)$ by symmetry of $\sim_\alpha$.
Hence $\mathcal{R}_k \cap \mathcal{B}_\alpha(i)$ is nonempty and the
$\arg\max$ in line~2 of Algorithm~\ref{algo2} is over a nonempty set
that includes $r^{\mathrm{old}}$. The selected $r^{\mathrm{new}}$ thus
lies in $\mathcal{B}_\alpha(i)$ (guardrail preserved) and satisfies
$\fsim(\mathbf{E}_i, \mathbf{E}_{r^{\mathrm{new}}}) \geq
 \fsim(\mathbf{E}_i, \mathbf{E}_{r^{\mathrm{old}}})$ since
$r^{\mathrm{old}}$ is feasible in the $\arg\max$.
\end{proof}

We emphasize that Theorem~\ref{thm:guardrail} holds for any Stage~1
partition, including $K=1$; the choice of initial clustering affects
only the number of final clusters and runtime, not correctness. The
classical Johnson--Chv\'atal analysis further bounds
$|\mathcal{R}_k| \leq (1 + \ln |C_k|) \cdot \mathrm{OPT}_k$, where
$\mathrm{OPT}_k$ is the minimum cover within $C_k$, providing a formal
ceiling on the data-reduction loss from greedy selection. Full proofs
and additional analysis are provided in Appendix~\ref{app:formal-guarantees}.

\subsection{Computational Complexity}
\label{sec:complexity}

We analyze the time and memory complexity of Algorithms~\ref{algo1}
and~\ref{algo2} as a function of dataset size $n$, embedding dimension
$d$, and number of initial clusters $K$. Let $n_k = |C_k|$ with
$\sum_k n_k = n$.

\paragraph{Stage 1 (Mini-batch K-Means).}
With user-specified batch size $b$ and $T_1$ iterations, Stage~1 runs
in $O(T_1 \, b \, K \, d)$ time and $O((n + K) d)$ memory
\citep{sculley2010minibatchkmeans}. Both are effectively linear in $n$
since $b, T_1$ are constants independent of $n$.

\paragraph{Stage 2 (representative selection within each $C_k$).}
Computing the pairwise similarity and match matrices costs
$O(n_k^2 \, d)$ time and $O(n_k^2)$ memory. The greedy loop maintains
row-sums of $M$ restricted to the unmatched set; since the matched sets
$\mathcal{M}_{r^{*(t)}}^{(t)}$ partition $C_k$, the total update cost
across all iterations is $O(n_k^2)$. Stage~2 thus costs $O(n_k^2 \, d)$
time and $O(n_k^2)$ memory per cluster, and processing one initial
cluster at a time gives peak memory $O(\max_k n_k^2)$.

\paragraph{Stage 2b (reassignment).}
Algorithm~\ref{algo2} reuses the already-materialized $S$ and $M$ and
costs $O(n_k \, |\mathcal{R}_k|) \leq O(n_k^2)$ per cluster, dominated
by Stage~2 and adding no asymptotic overhead.

\paragraph{Aggregate.}
For roughly balanced initial clusters with $n_k \approx n/K$,
$$
T_{\text{total}} = O\!\left(n d + \tfrac{n^2 d}{K}\right),
\quad
M_{\text{total}} = O\!\left(n d + \tfrac{n^2}{K^2}\right).
$$
Thus increasing $K$ reduces Stage~2 time linearly and peak memory
quadratically, at the cost of slightly reduced data-reduction
efficiency (Appendix~\ref{app:robust}). In the regime $K = \Theta(n)$,
the entire algorithm becomes linear in $n$. See Appendix~\ref{app:complexity} for the full derivation.

\paragraph{Comparison with baselines.}
Table~\ref{tab:complexity} contrasts SGC with the
baselines benchmarked in Section~\ref{sec:sim}. The average-metric methods
(K-Means, Gaussian Mixture, etc.) do not admit a within-cluster similarity
constraint, and methods with $O(n^2)$ memory (Agglomerative, Spectral) become
prohibitive at industry scale: e.g., Agglomerative requires ${\sim}90$\,TB at
$n$=5M, far beyond a single instance. The threshold-based methods most related
to ours --- Star Clustering, SimClus, and Community Detection --- all build the full
$O(n^2)$ pairwise similarity matrix over the entire dataset, incurring $O(n^2 d)$
time and $O(n^2)$ memory and hitting the same scalability wall (Section
\ref{app:comp-comp}). Algorithm~\ref{algo1} sidesteps this by restricting the
quadratic computation to within each initial cluster, yielding
$O(\max_k n_k^2) \ll O(n^2)$ memory for any $K \gg 1$ --- the underlying reason
for the up-to-$1000\times$ runtime and memory advantage observed in
Section~\ref{sec:sim}.

\begin{table}[t]
  \caption{Asymptotic complexity comparison. $n$: dataset size, $d$:
embedding dimension, $K$: number of (final) clusters or initial
clusters for SGC, $T$: iterations, $b$: mini-batch
size. The optional reassignment step (Alg.~\ref{algo2}) does not
change the asymptotic complexity. Baselines do not support a
within-cluster similarity guardrail.}
  \label{tab:complexity}
  \centering
  \adjustbox{width=0.7\columnwidth}{
  \begin{tabular}{lll}
    \toprule
    Method & Time & Memory \\
    \midrule
    \textbf{SGC (Alg.~\ref{algo1})} & $O(nd + n^2 d / K)$ & $O(nd + n^2/K^2)$ \\
    \midrule
    Star Clustering & $O(n^2 d)$ & $O(n^2)$ \\
    SimClus & $O(n^2 d)$ & $O(n^2)$ \\
    Community Detection & $O(n^2 d)$ & $O(n^2)$ \\
    \midrule
    K-Means & $O(n K d T)$ & $O(nd)$ \\
    Mini-batch K-Means & $O(b K d T)$ & $O(nd)$ \\
    Agglomerative (Ward) & $O(n^2 d)$ & $O(n^2)$ \\
    BIRCH & $O(nd)$ amortized & $O(nd)$ \\
    Spectral & $O(n^3)$ & $O(n^2)$ \\
    Gaussian Mixture (EM) & $O(n K d^2 T)$ & $O(nd + K d^2)$ \\
    \bottomrule
  \end{tabular}
  }
\end{table}

\subsection{Design Considerations and Technical Details}
\label{app:design}
By design, SGC satisfies all the desired requirements listed in Section \ref{sec:intro}: Steps 4-5 ensure that any customer and their matched representative share a persona similarity above the user-specified threshold and the same user-specified attributes (Section \ref{sec:theory}). The algorithm scales to large datasets (38 million samples in our recommendation use case in Section \ref{sec:app}) because the embedding, Mini-batch K-Means, and greedy selection steps are all cheap, and the expensive pairwise similarity computation is confined to within each initial cluster (Section \ref{sec:complexity}). Section \ref{sec:sim} shows the speed advantage over benchmark methods, and Section \ref{sec:tradeoff} shows that even at a relatively high similarity threshold of 0.75, we still achieve 186-fold data reduction.

We cluster on persona embeddings to capture semantic similarity, a better approach than lexical near-deduplication like MinHash \citep{broder1997minhash}, which cannot handle paraphrases. We select an embedding model from the SentenceTransformer model zoo \citep{reimers-2019-sentence-bert} based on three criteria: the outputs support both Euclidean distance (used in initial clustering) and Cosine similarity (used in representative selection), the model performs well across benchmark datasets, and the latency is reasonable. Based on these criteria and the input token count of our shopping persona data, we select \texttt{all-MiniLM-L6-v2} for the recommendation use case, though any model meeting the criteria can be used.

The initial clustering need not be precise, since the subsequent representative selection will further partition it. We thus choose Mini-batch K-Means, which is significantly faster and more memory-efficient than standard K-Means while achieving similar results \citep{sculley2010minibatchkmeans}. The further partition step also makes SGC robust to its hyperparameters (Appendix~\ref{app:robust}). The number of initial clusters $K$, however, controls a meaningful tradeoff: increasing $K$ reduces Stage~2 time linearly and peak memory quadratically (Section~\ref{sec:complexity}), but also slightly reduces data-reduction efficiency by potentially separating similar customers into different initial clusters. On 100K shopping personas, varying $K$ from 10 to 250 changes the final cluster count by only ${\sim}50\%$ while keeping minimal within-cluster similarity at the user-specified threshold (Table~\ref{tab:robustness}). In practice we recommend the smallest $K$ that latency and memory budgets allow.

Between periodic re-clustering, new customers can be assigned to existing representatives whose $\alpha$-balls cover them, or form new clusters otherwise.

\section{Experiments}
\label{sec:exp}

In Section \ref{sec:sim}, we show empirical evidence that the proposed clustering algorithm can guarantee the user-specified minimal similarity between any sample and the cluster representative while having a significant speed advantage over benchmarks.\footnote{A note on reproducibility: while we cannot release the codebase, the internal persona dataset, or the business metrics from the A/B test, we provide pseudocode to exactly reproduce the proposed method (Algorithms~\ref{algo1} and~\ref{algo2}), the experimental setup and hyperparameters (Appendix~\ref{app:hyp}), benchmarks on public datasets (Section~\ref{sec:sim}), and the exact latency and cost reductions from the deployment (Section~\ref{sec:app}).} In Section \ref{sec:cluster_size}, we show that the proposed algorithm results in more skewed cluster size distribution than benchmarks, enabling effective trimming of tail clusters to further improve data size reduction. In Section \ref{sec:rel}, we validate the intuition that higher within-cluster similarity leads to more relevant product recommendations. In Section \ref{sec:tradeoff}, we investigate the impact of required minimal similarity and household attributes on the number of final clusters to quantify the tradeoff between personalization and data reduction. Additional details of the experimental setup are provided in Appendix~\ref{app:hyp}.

We benchmark on both the internal dataset of shopping personas and the public datasets of AG News \citep{DelCorso2005,Gulli2005} and Cosmopedia \citep{benallal2024cosmopedia}. The full internal dataset contains 38 million customers' shopping personas and estimated household attributes derived from interaction histories. We take a random subset of 100K samples from the internal data to speed up experiments, especially when certain benchmark methods, such as Agglomerative Clustering and Spectral Clustering, cannot scale to large datasets (e.g., Agglomerative Clustering's $O(n^2)$ memory requirement translates to ${\sim}90$\,TB on just 5 million customers; see Section~\ref{sec:complexity}). For public data we use the training split of AG News with 120K samples and Cosmopedia-100K of 100K samples.

\subsection{Within-Cluster Similarities}
\label{sec:sim}
SGC and SGC-R satisfy the minimal similarity and attribute matching requirements by design, while existing clustering methods cannot accommodate these requirements or cannot scale. We benchmark against two families of methods. The first is \emph{average-metric} methods, which optimize an aggregate objective and do not accept a similarity threshold: Mini-batch K-Means \citep{sculley2010minibatchkmeans}, K-Means \citep{macqueen1967multivariate}, Agglomerative Clustering \citep{Ward1963HierarchicalGT}, BIRCH \citep{zhang1996birch}, Spectral Clustering \citep{shi2000normalized,von2007tutorial}, and Gaussian Mixture \citep{dempster1977gmm}, as implemented in Scikit-learn \citep{scikit-learn}. The second is \emph{threshold-based} methods, most closely related to ours, which enforce a within-cluster similarity threshold: Star Clustering \citep{aslam1998static}, SimClus \citep{alhasan2011}, and Community Detection \citep{reimers-2019-sentence-bert} (Section~\ref{sec:related}, Appendix~\ref{app:related-extended}).

In clustering, there is a general tradeoff between reducing approximation errors and reducing data size, measured by within-cluster similarity and the number of clusters. Increasing the number of clusters generally improves within-cluster similarity at the cost of less data size reduction. The two families require different fair-comparison protocols. Average-metric methods do not accept a similarity threshold, so we specify the \emph{same number of clusters} as our method and compare within-cluster similarity and run time (Table~\ref{tab:sim}). Threshold-based methods, like ours, accept a similarity threshold, so we instead specify the \emph{same threshold} and compare the resulting number of clusters (data reduction), similarity, and run time (Table~\ref{tab:sim-threshold}). We set the minimal similarity threshold between any sample and its cluster representative to 0.75 on Shopping Personas, 0.3 on AG News, and 0.4 on Cosmopedia so that our proposed algorithm attains a $\sim$50-fold reduction in data size across all three datasets ($\alpha$ depends on the dataset/embedding for the same target reduction fold). All methods are run on a single \texttt{r7i.12xlarge} instance without parallelization. 

Table~\ref{tab:sim} shows that against the average-metric methods, our method uniquely guarantees the minimal similarity --- 0\% of samples below the threshold, versus up to 21\% for the baselines --- while finishing in just a few seconds. 
Note that the number of clusters can affect the run time, so using Mini-batch K-Means to generate all the final clusters ($K=2545$) ends
up $\sim$10$\times$ slower than the proposed two-stage approach that uses Mini-batch K-Means to
only generate the initial clusters ($K=40$) before further partition.

Table~\ref{tab:sim-threshold} compares the threshold-based methods, which all attain the guardrail by construction except Community Detection. Three findings stand out. First, the proposed method is \emph{one to three orders of magnitude faster} (e.g., 2.7 seconds vs. 768 seconds for SimClus on Shopping Personas) with correspondingly lower memory (Section~\ref{sec:complexity}), because the baselines materialize the full $O(n^2)$ similarity matrix while ours confines the quadratic computation to within initial clusters. Second, SimClus --- which is a special case of our method with $K=1$ and no attribute matching or reassignment --- produces the fewest clusters (highest data reduction) but at the cost of the largest runtime and memory, and a lower average similarity; our proposed method trades a modest increase in cluster count for its scalability and the highest average similarity. Third, Community Detection does not guarantee the similarity threshold: 36--47\% of its samples fall below $\alpha$, because its overlap-removal step can reassign a sample to a representative outside its threshold (Appendix~\ref{app:threshold-comparison}). Star Clustering, using degree-based rather than coverage-based selection, produces up to $2\times$ more clusters than our method and a lower average similarity than the reassignment variant. 


\begin{table}[t]
  \caption{Comparison against \emph{average-metric} clustering methods on internal Customer Shopping Personas, AG News, and Cosmopedia, at a \emph{matched number of clusters} (equal to the proposed method's). Columns: average within-cluster similarity (Avg. Sim.), minimal within-cluster similarity (Min. Sim.), percent of samples below the desired similarity threshold (Perc. Below Thresh.), and run time in seconds; all on a single \texttt{r7i.12xlarge} without parallelization. Only the proposed method (SGC and SGC-R) guarantees the minimal similarity (0\% below threshold). Table~\ref{tab:sim-threshold} compares threshold-based methods.}
  \label{tab:sim}
  \centering
  \adjustbox{width=\textwidth}{
  \begin{tabular}{lrrrrrrrrrrrr}
    \toprule
    & \multicolumn{4}{c}{\textbf{Shopping Personas}} & \multicolumn{4}{c}{\textbf{AG News}} & \multicolumn{4}{c}{\textbf{Cosmopedia}} \\
    \cmidrule(lr){2-5} \cmidrule(lr){6-9} \cmidrule(lr){10-13}
    Method & \begin{tabular}[c]{@{}r@{}}Avg.\\Sim.\end{tabular} & \begin{tabular}[c]{@{}r@{}}Min.\\Sim.\end{tabular} & \begin{tabular}[c]{@{}r@{}}Perc.\\Below\\Thresh.\end{tabular} & \begin{tabular}[c]{@{}r@{}}Time\\(sec.)\end{tabular}
           & \begin{tabular}[c]{@{}r@{}}Avg.\\Sim.\end{tabular} & \begin{tabular}[c]{@{}r@{}}Min.\\Sim.\end{tabular} & \begin{tabular}[c]{@{}r@{}}Perc.\\Below\\Thresh.\end{tabular} & \begin{tabular}[c]{@{}r@{}}Time\\(sec.)\end{tabular}
           & \begin{tabular}[c]{@{}r@{}}Avg.\\Sim.\end{tabular} & \begin{tabular}[c]{@{}r@{}}Min.\\Sim.\end{tabular} & \begin{tabular}[c]{@{}r@{}}Perc.\\Below\\Thresh.\end{tabular} & \begin{tabular}[c]{@{}r@{}}Time\\(sec.)\end{tabular} \\
    \midrule
    \textbf{SGC}   & 0.802 & \textbf{0.750} & \textbf{0.0\%}  & \textbf{2.7}  & 0.413 & \textbf{0.300}  & \textbf{0.0\%}  & \textbf{4.2}  & 0.512 & \textbf{0.400} & \textbf{0.0\%} & \textbf{6.8} \\
    \textbf{SGC-R} & 0.825 & \textbf{0.750} & \textbf{0.0\%}  & \textbf{2.7} & 0.490 & \textbf{0.300}  & \textbf{0.0\%}  & \textbf{4.2}  & 0.588 & \textbf{0.400} & \textbf{0.0\%} & \textbf{6.8} \\
    Mini-batch K-Means             & 0.819 & 0.554 & 6.1\%  & 44   & 0.553 & 0.006  & 5.7\%  & 33 & 0.619 & -0.005& 4.3\% & 91 \\
    K-Means                        & 0.828 & 0.600 & 2.9\%  & 154  & 0.561 & -0.038 & 4.9\%  & 136  & \textbf{0.630} & 0.133 & 2.9\% & 296 \\
    Agglomerative                  & 0.812 & 0.542 & 10.3\% & 1778 & 0.543 & -0.108 & 9.6\%  & 2854 & 0.608 & 0.007 & 7.0\% & 2851 \\
    BIRCH                          & 0.799 & 0.558 & 14.0\% & 28   & 0.536 & -0.068 & 9.2\%  & 878  & 0.603 & -0.001 & 7.0\% & 1337 \\
    Spectral                       & 0.807 & 0.445 & 11.3\% & 6281 & 0.505 & -0.186 & 21.2\% & 1221 & 0.592 & -0.077 & 13.9\% & 1804 \\
    Gaussian Mixture               & \textbf{0.846} & 0.618 & 0.8\%  & 5548 & \textbf{0.562} & 0.006  & 4.8\%  & 1898 & \textbf{0.630} & 0.090 & 3.0\% & 4254 \\
    \bottomrule
  \end{tabular}
  }
\end{table}

\begin{table}[t]
  \caption{Comparison against \emph{threshold-based} clustering methods at a
  \emph{matched similarity threshold} (0.75 / 0.3 / 0.4), so we compare the
  resulting number of final clusters (\# Clusters; fewer means more data
  reduction); remaining columns and setup as in Table~\ref{tab:sim}. All methods
  guarantee the threshold by construction except Community Detection, which
  reassigns samples during overlap removal. The proposed method attains
  competitive data reduction and the highest average similarity at one-to-three
  orders of magnitude less runtime and memory
  (Section~\ref{sec:complexity}, Appendix~\ref{app:related-extended}).}
  \label{tab:sim-threshold}
  \centering
  \adjustbox{width=\textwidth}{
  \begin{tabular}{lrrrrrrrrrrrr}
    \toprule
    & \multicolumn{4}{c}{\textbf{Shopping Personas}} & \multicolumn{4}{c}{\textbf{AG News}} & \multicolumn{4}{c}{\textbf{Cosmopedia}} \\
    \cmidrule(lr){2-5} \cmidrule(lr){6-9} \cmidrule(lr){10-13}
    Method & \begin{tabular}[c]{@{}r@{}}\#\\Clusters\end{tabular} & \begin{tabular}[c]{@{}r@{}}Avg.\\Sim.\end{tabular} & \begin{tabular}[c]{@{}r@{}}Perc.\\Below\\Thresh.\end{tabular} & \begin{tabular}[c]{@{}r@{}}Time\\(sec.)\end{tabular}
           & \begin{tabular}[c]{@{}r@{}}\#\\Clusters\end{tabular} & \begin{tabular}[c]{@{}r@{}}Avg.\\Sim.\end{tabular} & \begin{tabular}[c]{@{}r@{}}Perc.\\Below\\Thresh.\end{tabular} & \begin{tabular}[c]{@{}r@{}}Time\\(sec.)\end{tabular}
           & \begin{tabular}[c]{@{}r@{}}\#\\Clusters\end{tabular} & \begin{tabular}[c]{@{}r@{}}Avg.\\Sim.\end{tabular} & \begin{tabular}[c]{@{}r@{}}Perc.\\Below\\Thresh.\end{tabular} & \begin{tabular}[c]{@{}r@{}}Time\\(sec.)\end{tabular} \\
    \midrule
    \textbf{SGC}   & 2545 & 0.802 & \textbf{0.0\%} & \textbf{2.7} & 1795 & 0.413 & \textbf{0.0\%} & \textbf{4.2} & 2399 & 0.512 & \textbf{0.0\%} & \textbf{6.8} \\
    \textbf{SGC-R} & 2545 & \textbf{0.825} & \textbf{0.0\%} & \textbf{2.7} & 1795 & \textbf{0.490} & \textbf{0.0\%} & \textbf{4.2} & 2399 & \textbf{0.588} & \textbf{0.0\%} & \textbf{6.8} \\
    SimClus                        & \textbf{1466} & 0.783 & \textbf{0.0\%} & 768 & \textbf{702} & 0.369 & \textbf{0.0\%} & 1892 & \textbf{1241} & 0.476 & \textbf{0.0\%} & 1822 \\
    Star (partition)               & 5173 & 0.811 & \textbf{0.0\%} & 26 & 1675 & 0.470 & \textbf{0.0\%} & 22 & 2903 & 0.562 & \textbf{0.0\%} & 21 \\
    Star (overlap)                 & 5173 & 0.767 & \textbf{0.0\%} & 25 & 1675 & 0.355 & \textbf{0.0\%} & 22 & 2903 & 0.455 & \textbf{0.0\%} & 21 \\
    Community Det.            & 4563 & 0.761 & 35.7\% & 71 & 7485 & 0.337 & 47.3\% & 28 & 9009 & 0.449 & 45.9\% & 24 \\
    \bottomrule
  \end{tabular}
  }
\end{table}

\subsection{Cluster Size Distributions}
\label{sec:cluster_size}
As detailed in Section \ref{sec:method}, the iterative greedy selection of representative samples in the proposed algorithm by design leads to a highly skewed cluster size distribution, with practical advantages: we can remove a large percentage of tail clusters at the cost of removing a small percentage of samples, improving data size reduction. Additionally, if quality assurance is expensive (e.g., human-in-the-loop), we can focus on validating the top clusters by size. We examine how many overall samples the top clusters by size cover for different clustering methods in Figures~\ref{fig:size} and~\ref{fig:size2}, based on the experiment output from Section \ref{sec:sim}.

We see that as the percentage of top clusters increases, the proposed algorithm covers significantly more samples than the average-metric baselines, which produce more balanced clusters. Among the threshold-based methods, the greedy coverage-based methods (SGC and SimClus) produce the most skewed distributions and thus the most trimming-friendly coverage, while the degree- and neighborhood-based methods (Star Clustering, Community Detection) are less skewed. The practical implication is that with SGC we can keep only $\sim$4\% of clusters to cover 90\% of Shopping Personas, offering an additional $\sim$25-fold reduction in downstream computations at the cost of removing 10\% of samples, which can be compensated for by oversampling in the beginning.

\begin{figure}[t!]
  \centering
  \includegraphics[width=1.0\textwidth]{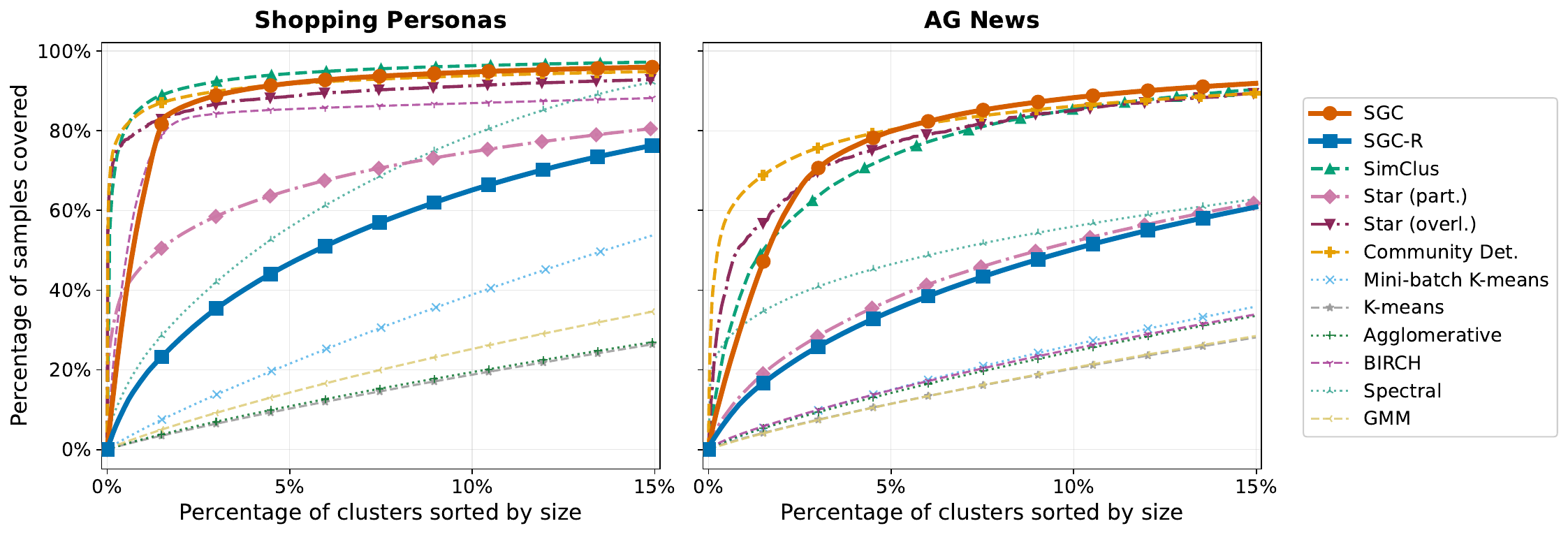}
  \caption{Sample coverage by sorted clusters on Shopping Personas and AG News. Greedy coverage-based methods (SGC, SimClus) yield the most skewed, trimming-friendly distributions.}
  \label{fig:size}
\end{figure}

\subsection{Impact of Within-Cluster Similarity on Recommendation Relevance}
\label{sec:rel}
Recall that for our application in eCommerce, after clustering, a customer will inherit the recommendations based on their cluster representative's shopping persona. A key intuition behind using the minimal within-cluster similarity as the quality guardrails is: The more similar a customer is to their cluster representative, the more relevant the recommendations based on the representative will be to that customer. In this experiment we empirically validate the impact of within-cluster similarity on the relevance of recommendations.

We randomly sample 200 customers as the cluster representatives that the product recommendations will be based on. For each representative, we perform a stratified sampling of its cluster members based on the persona similarity between the member and the representative: We sample 5 customers for each of the 7 similarity buckets (0.2 - 0.3, 0.3 - 0.4, $\ldots$, 0.8 - 0.9). This sampling process yields 7,000 member-representative pairs of varying similarities. We generate 15 product recommendations based on the shopping persona of each cluster representative, let the cluster members of each representative inherit the same recommendations, then use the LLM-based Marketing Critic to judge the relevance between each product recommendation and the customer's shopping persona on a scale of 1-5 from least relevant to perfectly relevant. We compute the linear correlation and Spearman Rank Correlation between the member-representative similarity and the average product relevance rating of the 7,000 member-representative pairs; and find the correlations to be 0.781 and 0.797 respectively (p-value < 0.001 for both metrics). The scatterplot of the average relevance rating versus the member-representative similarity is shown in Figure \ref{fig:rel}. The significant correlations help validate using the within-cluster similarity as quality guardrails of product recommendations.

\begin{figure}[t]
  \centering
  \includegraphics[width=0.7\columnwidth]{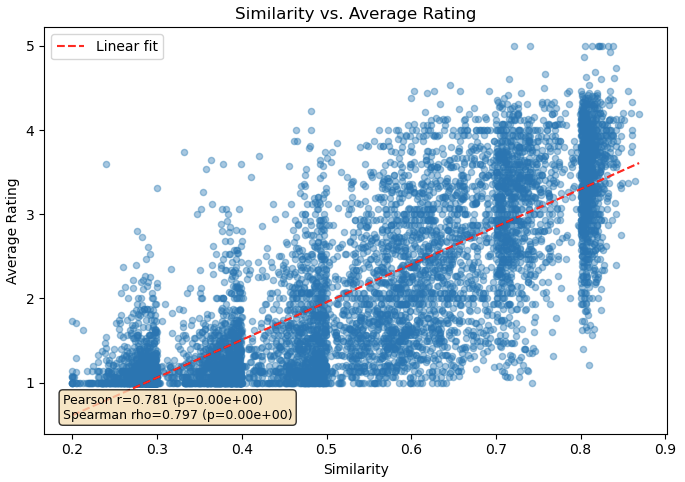}
  \caption{Recommendation relevance rating by member-representative similarity.}
  \label{fig:rel}
\end{figure}

\subsection{Personalization vs.\ Data Reduction}
\label{sec:tradeoff}
The user-specified minimal similarity $\alpha$ and matching-attribute
set jointly control the tradeoff between personalization and data
reduction. On a 5M-customer subset, with no attribute matching, the
proposed algorithm achieves $186\times$ reduction at $\alpha=0.75$ and
$998\times$ at $\alpha=0.7$; adding exact matching of adult gender,
adult count, child presence, child gender, and child age tightens
these to $51\times$ and $100\times$ respectively, while keeping
the average within-cluster similarity above $0.79$. The change in data reduction when household attributes are added also indicates that persona similarity alone is insufficient to ensure that these key attributes match within a final cluster, hence the necessity of attribute matching.
Practitioners can tune both the minimal similarity and the matching attributes to navigate the tradeoff between personalization and data reduction:
stronger guardrails help preserve personalization but reduce the fold of data reduction, while never violating the guarantees of Theorem~\ref{thm:guardrail}.
In practice the data reduction fold is often determined by budget constraints, and the guardrails are set based on safety/quality constraints and the target reduction fold.
For a new dataset, these values can be chosen by sweeping $\alpha$ and the optional matching attributes on a small random sample to hit the target.
The full table sweeping across thresholds and
attribute configurations on the 5M-customer subset is provided in Appendix~\ref{app:dedup_fold}.

\section{Application}
\label{sec:app}
We tested and launched SGC in an A/B test in June 2025, validating the end-to-end recommender system (Figure \ref{fig:crm}, introduced in Section~\ref{sec:intro}) that generates personalized product recommendations from shopping personas and household attributes estimated from interaction histories. Each component in the pipeline is separately developed and validated; in particular, the Marketing Critic achieves 97\% precision and 60\% recall in persona-item relevance predictions on 6,000 human-annotated test samples. Because the pipeline is almost entirely LLM-based, it faces scalability issues at the targeted scale of 38 million customers. Our clustering algorithm makes the A/B test feasible by reducing the data size for downstream computations by $\sim$50 times, while guaranteeing a minimal customer persona similarity of 0.77 and exact matching of adult gender, adult count, child presence, child gender, and child age within each cluster. Thus, our clustering algorithm plays a key role in unblocking and supporting the launch.

Concretely, without clustering the query generation step requires \$114,000 and 22.8 days (Haiku 3 on demand, limited by throughput) and the Marketing Critic step \$1,018,500 and 485 days (Nova Micro on demand); with clustering, these drop to \$2,100 and 0.42 days, and \$20,370 and 9.7 days, respectively. Further reduction can be achieved as needed by removing tail clusters (Section~\ref{sec:cluster_size}) or lowering the required minimal similarity (Section~\ref{sec:tradeoff}).

\begin{sloppypar}
We also evaluate the impact of clustering on recommendation quality. Ideally, cluster members receive recommendations as relevant to them as to their representative, i.e., the product-to-representative-customer relevance rate should be close to the product-to-cluster-member relevance rate for recommendations personalized to the representative. Applying the Marketing Critic to 5,000 randomly sampled representatives and their corresponding cluster members, we find that the relative difference between the two rates is only 0.7\% (95\% Confidence Interval $[-0.1\%, 1.5\%]$).
%
%
The small difference helps validate that the proposed clustering method is able to retain personalization while increasing scalability in the case of personalized recommendations.
\end{sloppypar}

The A/B test showed significant positive impact on revenues and engagement
(p-value < 0.05)
and enabled subsequent launches.

\section{Conclusion}
\label{sec:disc}
In this work, we propose SGC, a scalable two-stage clustering algorithm with guarantees of minimal similarity and attribute matching between each sample and its cluster representative as quality guardrails while reducing downstream cost and latency. We provide empirical evidence that the proposed method uniquely combines per-sample guardrails with scalability, running one to three orders of magnitude faster than benchmark methods, and that its skewed cluster sizes enable trimming tail clusters for further data reduction. We apply SGC to a shopping-persona-based recommender system for 38 million customers, cutting downstream LLM computation by ${\sim}50\times$ (from over \$1.1M and 500 days to ${\sim}$\$22K and ${\sim}$10 days) and unblocking the launch. For future work, richer ways to measure inter-customer similarity and select cluster representatives could tighten the link between the guardrails and downstream recommendation relevance.

\begin{acknowledgments}
The paper focuses on the novel clustering method and its application, while the persona-based recommender system (Section~\ref{sec:app}) involves many contributors and multi-team collaborations, including but not limited to our engineering partners Nicholas Rahardja, Sri Venkata Vivek Dhulipala, Minh Le, and Daniel Di Pasquale.
\end{acknowledgments}

\section*{Declaration on Generative AI}
During the preparation of this work, the author(s) used Claude Opus 4.7 and 4.8 to identify and correct grammatical errors, typos, and other writing mistakes of the initial draft and make format improvements. No AI tools were used for ideation, data analysis, experimentation, the formulation of conclusions, or the initial drafting. After using these tool(s)/service(s), the author(s) reviewed and edited the content as needed and take(s) full responsibility for the publication's content.

\bibliography{amlc_2025_workshop}

\appendix

\section{Extended Related Work}
\label{app:related-extended}

This appendix expands on Section \ref{sec:related}: we first give a detailed
comparison against the threshold-based clustering methods most related to ours,
then discuss other LLM-scaling approaches and argue that they are complementary
to --- rather than substitutes for --- input-side clustering for our use case.

\subsection{Detailed Comparison with Threshold-Based Clustering}
\label{app:threshold-comparison}

All threshold-based methods we consider impose a user-specified lower bound
$\alpha$ on within-cluster similarity, but they differ in how they select
representatives, whether they guarantee the bound, and whether they scale.
Table \ref{tab:related-comparison} summarizes the comparison.

\paragraph{Star Clustering \citep{aslam1998static}.}
Star Clustering covers a thresholded similarity graph with star-shaped
subgraphs: it repeatedly selects the highest-degree uncovered vertex as a star
center and assigns its neighbors as satellites. The threshold is guaranteed only
between each satellite and its center, not between arbitrary pairs. Because
degree-based selection produces an \emph{independent dominating set} of centers,
it selects more representatives than necessary --- its approximation ratio on the
number of clusters can be as poor as $\Omega(n)$~\citep{alhasan2011} --- which
translates to weaker data reduction in our experiments (Section \ref{sec:sim}).
It also requires the full $O(n^2)$ similarity matrix. A ``partitioning'' variant
reassigns each member to its most-similar center to produce disjoint clusters,
analogous to our reassignment step but built on degree-based rather than
coverage-based representatives.

\paragraph{SimClus \citep{alhasan2011}.}
SimClus reformulates lower-bound-similarity clustering as \textsc{Set Cover} and
applies the Johnson--Chv\'atal greedy heuristic
\citep{johnson1974setcover,chvatal1979setcover}: at each step it selects the
sample covering the most still-uncovered samples, achieving an $O(\log n)$
approximation on the number of clusters. This coverage-based selection is exactly
our Stage~2, so SimClus produces the fewest clusters (best data reduction) among
the baselines. However, it computes the entire $O(n^2)$ similarity matrix and
selects globally, making it the slowest and most memory-intensive method in
practice (Section \ref{sec:sim}, Section \ref{app:comp-comp}). Our method can be
seen as SimClus made scalable by confining the greedy selection to within
Mini-batch K-Means initial clusters, augmented with attribute matching and an
optional reassignment step; the price is a modest increase in the number of
clusters, since two similar samples split across different initial clusters
cannot share a representative (Remark on partition-constrained vs.\ global
optimum in Appendix \ref{app:formal-guarantees}).

\paragraph{Community Detection \citep{reimers-2019-sentence-bert}.}
The Community Detection utility in SentenceTransformers forms, for each point, a
community of all points above the similarity threshold, sorts communities by
size, and greedily removes overlap. Two caveats matter for our use case. First,
during overlap removal a member's original seed can be claimed by a larger
community, so the member is reassigned to a new community whose reported
representative may be \emph{below} the threshold --- hence Community Detection does
not strictly guarantee the minimal similarity (we observe $36$--$47\%$ of samples
below $\alpha$ in Section \ref{sec:sim}). Second, its default drops points in
communities smaller than a minimum size; we set this minimum to $1$ for a
full-coverage comparison. To scale, SentenceTransformers batches the data and runs
Community Detection within each batch --- this can be viewed as a special case of
performing an initial round of clustering with a cheap method, except that random
batching is more likely than Mini-batch K-Means to separate similar samples into
different batches, yielding more final clusters.

\paragraph{SemDedup \citep{abbas2023semdedupdataefficientlearningwebscale}.}
SemDedup shares our two-stage structure (cluster embeddings, then compare within
each cluster) and was designed for web-scale data. However, its objective is to
\emph{remove} semantic duplicates to shrink a training set: within each cluster it
keeps one exemplar per group of near-duplicates and discards the rest, retaining
no mapping from a discarded sample to its exemplar. Repurposing it as a clustering
method for output propagation would require inventing such an assignment, and its
similarity threshold governs the retained set rather than a per-sample
member-to-representative guardrail. We therefore address it in discussion but do
not benchmark it; SimClus is the faithful, clustering-native representative of the
coverage-based threshold family that we do benchmark.

\begin{table}[t]
  \caption{Comparison of threshold-based clustering methods. ``Selection'' is the
  representative-selection rule; ``Guarantees $\alpha$'' is whether every member
  is guaranteed similarity $\geq \alpha$ to its representative; ``Attr.\ match''
  is exact attribute matching; ``Scalable'' is whether the method avoids
  materializing the full $O(n^2)$ similarity matrix over all samples. SemDedup
  ($\ast$) bounds similarity only within the retained set, not per-sample to a
  representative, and retains no member-to-representative mapping.}
  \label{tab:related-comparison}
  \centering
  \adjustbox{width=0.7\columnwidth}{
  \begin{tabular}{lcccc}
    \toprule
    Method & Selection & \begin{tabular}[c]{@{}c@{}}Guar.\\$\alpha$\end{tabular} & \begin{tabular}[c]{@{}c@{}}Attr.\\match\end{tabular} & Scalable \\
    \midrule
    Star Clustering & degree & \checkmark & \ding{55} & \ding{55} \\
    SimClus & coverage & \checkmark & \ding{55} & \ding{55} \\
    Community Det. & neighborhood & \ding{55} & \ding{55} & \ding{55} \\
    SemDedup & farthest-centroid & \ding{55}$^\ast$ & \ding{55} & \checkmark \\
    \midrule
    \textbf{SGC} & coverage & \checkmark & \checkmark & \checkmark \\
    \bottomrule
  \end{tabular}
  }
\end{table}

\subsection{Complementary LLM-Scaling Approaches}
\label{app:llm-scaling}

Input-side clustering is one of several strategies for reducing LLM inference
cost, and it is complementary to the others rather than mutually exclusive.
\emph{KV caching} and prefix reuse \citep{Kwon2023,zheng2024sglang} amortize the
cost of a shared prompt prefix across requests, and \emph{prompt caching} is now
offered by major providers. \emph{Model compression} --- distillation
\citep{hinton2015distilling}, quantization \citep{dettmers2022llmint8}, and
pruning \citep{frantar2023sparsegpt} --- reduces the per-call cost of the model
itself. \emph{Inference-serving} optimizations such as continuous batching and
paged attention \citep{Kwon2023,zhu2025nanoflowoptimallargelanguage} raise
throughput. These operate on the prompt, the model, or the serving stack; our
approach instead reduces the \emph{number of distinct inputs} the LLM must
process, and can be layered on top of any of them.

Crucially, none of these alone addresses our target use case. KV/prompt caching
helps only when requests share a long prefix; our millions of shopping personas
are semantically similar but do not share verbatim prefixes, so caching yields
little reuse. Model compression reduces per-call cost but not the number of
calls, and is itself a non-trivial, time-consuming effort: distillation requires
running the expensive teacher over large data to generate training targets, then
training and evaluating student models, with quality risk on the exact
personalization and safety behaviors we must preserve. Clustering, by contrast,
reduces the call count by $\sim$$50\times$ with a provable per-sample guardrail
and negligible overhead, and remains compatible with caching, compression, and
serving optimizations applied to the reduced set of representative calls.

\section{Formal Guarantees of the Proposed Algorithm}
\label{app:formal-guarantees}

In this section we prove that both Algorithm~\ref{algo1} (the base
two-stage algorithm) and Algorithm~\ref{algo2} (the optional
reassignment variant) satisfy the minimal similarity and attribute
matching requirements stated in Section~\ref{sec:intro} (Requirements~1
and~2). We formalize the argument through the lens of \emph{set cover},
which naturally captures the representative-selection step.

\subsection{Preliminaries and Notation}

Let $\mathcal{D} = \{(P_i, A_i)\}_{i=1}^n$ be the input dataset, with
embeddings $E_i = f_{\text{emb}}(P_i)$. For a user-specified similarity
threshold $\alpha \in [-1, 1]$ and attribute space, define the
\emph{match relation} $\sim_\alpha$ on $\mathcal{D}$ by
$$
i \sim_\alpha j \;\iff\; f_{\text{sim}}(E_i, E_j) \ge \alpha
\;\text{ and }\; A_i = A_j.
$$
For any $i \in \{1,\dots,n\}$, let the \emph{$\alpha$-ball} (or
\emph{admissible set}) of $i$ be
$$
\mathcal{B}_\alpha(i) \;=\; \{\, j \in \{1,\dots,n\} : i \sim_\alpha j \,\}.
$$
Note that $\sim_\alpha$ is reflexive (since $f_{\text{sim}}(E_i, E_i)=1\ge\alpha$
for any reasonable $\alpha\le 1$, and trivially $A_i = A_i$) and
symmetric (since $f_{\text{sim}}$ is symmetric and equality of attributes
is symmetric), so $i \in \mathcal{B}_\alpha(i)$ and
$j \in \mathcal{B}_\alpha(i) \iff i \in \mathcal{B}_\alpha(j)$. It is
\emph{not} transitive in general, which is why clustering under this
relation is non-trivial.

A cluster assignment $\hat{f}_{\text{cluster}} : \mathcal{D} \to \{1,\dots,n\}$
with representative set $\mathcal{R} \subseteq \{1,\dots,n\}$ is said
to satisfy the \emph{guardrail property} if
\begin{equation}
\hat{f}_{\text{cluster}}(P_i, A_i) = r
\;\Longrightarrow\;
r \in \mathcal{R} \;\text{ and }\; i \in \mathcal{B}_\alpha(r).
\label{eq:guardrail}
\end{equation}
Equivalently, \eqref{eq:guardrail} says that every customer is assigned
to a representative whose $\alpha$-ball \emph{covers} them. This is
precisely the condition required by Requirements~1 and~2.

\subsection{Set-Cover Interpretation}

Within any initial cluster $C_k \subseteq \{1,\dots,n\}$ produced in
Stage~1, consider the collection of candidate cover sets
$$
\mathcal{F}_k \;=\; \bigl\{\, \mathcal{B}_\alpha(i) \cap C_k
\;:\; i \in C_k \,\bigr\}.
$$
Stage~2 of Algorithm~\ref{algo1} is exactly the classical greedy
algorithm for \textsc{Set Cover} applied to the universe $C_k$ with
candidate sets $\mathcal{F}_k$: at each iteration it picks the set
covering the largest number of still-uncovered elements, removes those
elements, and repeats until the universe is exhausted. Because
$\sim_\alpha$ is reflexive, every element $i \in C_k$ is contained in
at least one candidate set (namely $\mathcal{B}_\alpha(i) \cap C_k$),
so a valid cover always exists and the greedy procedure terminates.

\subsection{Correctness of Algorithm~\ref{algo1}}

\begin{theorem}[Guardrail guarantee for Algorithm~\ref{algo1}]
\label{thm:alg1}
For any input $\mathcal{D}$, any embedding function $f_{\text{emb}}$,
any symmetric similarity function $f_{\text{sim}}$ with
$f_{\text{sim}}(x,x) \ge \alpha$, any threshold $\alpha$, and any number
of initial clusters $K \ge 1$, the assignment $\hat{f}_{\text{cluster}}$
returned by Algorithm~\ref{algo1} satisfies the guardrail property
\eqref{eq:guardrail}. In particular, for every $i \in \{1,\dots,n\}$,
letting $r = \hat{f}_{\text{cluster}}(P_i, A_i)$,
$$
f_{\text{sim}}\!\bigl(f_{\text{emb}}(P_i), f_{\text{emb}}(P_r)\bigr)
\;\ge\; \alpha
\qquad\text{and}\qquad
A_i = A_r.
$$
\end{theorem}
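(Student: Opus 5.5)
The plan is a loop-invariant argument on Stage~2, applied one initial cluster at a time and then lifted to all of $\mathcal{D}$ using the fact that Stage~1 produces a partition.

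\textbf{Step 1 (reduction to one initial cluster).} Stage~1 partitions $\{1,\dots,n\}$ into $\{C_k\}_{k=1}^K$, whatever its quality. Stage~2 writes $\hat f_{\text{cluster}}$ only on elements of $C_k$ while processing $C_k$, and it only consults $S_{ij}, M_{ij}$ with $i,j\in C_k$. So it suffices to prove the guardrail property for elements of a fixed $C_k$. This reduction is also why the result holds for any $K\ge 1$.

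\textbf{Step 2 (invariant).} Fix $C_k$ and let $\mathcal{U}^{(t)}$ be the unmatched set before iteration $t$. From line~7, $M_{ij}=1$ iff $i\sim_\alpha j$. Hence $\mathcal{M}_i^{(t)}=\mathcal{B}_\alpha(i)\cap\mathcal{U}^{(t)}$. I will maintain the following invariant:
\begin{enumerate}[label=(\alph*)]
  \item the elements of $C_k\setminus\mathcal{U}^{(t)}$ are exactly those already assigned;
  \item each assigned $j$ has a value $r$ with $r\in\mathcal{R}$ and $j\in\mathcal{B}_\alpha(r)$;
  \item no element of $\mathcal{U}^{(t)}$ has been assigned.
\end{enumerate}
The inductive step goes as follows. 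Line~14 assigns $r^{*(t)}$ only to $j\in\mathcal{M}_{r^{*(t)}}^{(t)}\subseteq\mathcal{B}_\alpha(r^{*(t)})$. Line~13 puts $r^{*(t)}$ into $\mathcal{R}$. Line~15 removes exactly these $j$ from $\mathcal{U}$. Because $\mathcal{M}^{(t)}$ is intersected with $\mathcal{U}^{(t)}$, no previously assigned element is overwritten. So an assignment, once made, is final and satisfies the guardrail.

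\textbf{Step 3 (termination and totality).} This is the only step that needs care, because the $\arg\max$ ranges over all of $C_k$, not just $\mathcal{U}$. The tempting argument is that reflexivity puts $r^*$ in $\mathcal{M}_{r^*}$, but that fails here: $r^*$ may already be matched, so $r^*\notin\mathcal{U}$.

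The argument I will use instead: while $\mathcal{U}^{(t)}\neq\emptyset$, pick any $u\in\mathcal{U}^{(t)}$. Reflexivity gives $u\in\mathcal{B}_\alpha(u)$, which requires the hypothesis $f_{\text{sim}}(x,x)\ge\alpha$ together with $A_u=A_u$. Therefore $|\mathcal{M}_u^{(t)}|\ge 1$. Since $r^{*(t)}$ maximizes $|\mathcal{M}_i^{(t)}|$, also $|\mathcal{M}_{r^{*(t)}}^{(t)}|\ge 1$. Each iteration thus strictly shrinks $\mathcal{U}$, so the loop ends after at most $|C_k|$ iterations with $\mathcal{U}=\emptyset$. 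By invariant (a), every $i\in C_k$ is then assigned.

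\textbf{Step 4 (conclusion).} Combining Steps 1--3, every $i\in\{1,\dots,n\}$ receives an $r=\hat f_{\text{cluster}}(P_i,A_i)$ with $r\in\mathcal{R}$ and $i\in\mathcal{B}_\alpha(r)$. Unfolding $\sim_\alpha$ gives $f_{\text{sim}}(E_i,E_r)\ge\alpha$ and $A_i=A_r$, with $E=f_{\text{emb}}(P)$. Symmetry of $f_{\text{sim}}$ makes the direction of the pair irrelevant. This is exactly \eqref{eq:guardrail} and the displayed inequalities of the theorem.
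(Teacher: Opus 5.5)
Your proof is correct and follows the same structure as the paper's: fix an initial cluster, show that every assignment made in line~14 lands in $\mathcal{B}_\alpha(r^{*(t)})$ and is never overwritten, argue termination, and lift to all of $\mathcal{D}$ via the Stage~1 partition. The only divergence is the termination step, and there your version is the sound one. The paper asserts $r^{*(t)} \in \mathcal{M}^{(t)}_{r^{*(t)}}$ ``by reflexivity,'' which fails exactly when the $\arg\max$ over $C_k$ picks an already-matched customer (e.g., one absorbed by an earlier representative that still covers the most unmatched customers), whereas your bound $|\mathcal{M}^{(t)}_{r^{*(t)}}| \ge |\mathcal{M}^{(t)}_u| \ge 1$ for any $u \in \mathcal{U}^{(t)}$ is what actually guarantees progress.
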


\begin{proof}
Fix an initial cluster $C_k$ produced in Stage~1, and consider Stage~2
applied to $C_k$. Let $\mathcal{U}^{(t)}$ denote the set of unmatched
customers at the start of iteration $t$, with $\mathcal{U}^{(1)} = C_k$.

At iteration $t$, the algorithm computes, for each $i \in C_k$,
$$
\mathcal{M}_i^{(t)} \;=\; \{\, j \in \mathcal{U}^{(t)} \,:\, M_{ij} = 1 \,\}
\;=\; \mathcal{B}_\alpha(i) \cap \mathcal{U}^{(t)},
$$
where the second equality follows from the definition
$M_{ij} = \mathbf{1}\{S_{ij} \ge \alpha \text{ and } A_i = A_j\}$ in
line~7 of Algorithm~\ref{algo1}. It then selects
$r^{*(t)} = \arg\max_{i \in C_k} |\mathcal{M}_i^{(t)}|$ and assigns
every $j \in \mathcal{M}_{r^{*(t)}}^{(t)}$ to representative $r^{*(t)}$
(line~14). By construction,
$$
j \in \mathcal{M}_{r^{*(t)}}^{(t)}
\;\Longrightarrow\;
j \in \mathcal{B}_\alpha(r^{*(t)}),
$$
so $f_{\text{sim}}(E_j, E_{r^{*(t)}}) \ge \alpha$ and
$A_j = A_{r^{*(t)}}$. The set $\mathcal{M}_{r^{*(t)}}^{(t)}$ is then
removed from $\mathcal{U}^{(t+1)}$ (line~15), so no customer is ever
reassigned within Stage~2, and every customer assigned in iteration $t$
satisfies \eqref{eq:guardrail}.

Since $\sim_\alpha$ is reflexive, $r^{*(t)} \in \mathcal{M}_{r^{*(t)}}^{(t)}$,
so at least one element is removed per iteration, guaranteeing
termination in at most $|C_k|$ iterations with
$\mathcal{U}^{(T+1)} = \emptyset$. Hence every $i \in C_k$ is assigned
to some representative $r \in C_k$ with $i \in \mathcal{B}_\alpha(r)$.
Because Stage~1 partitions $\{1,\dots,n\}$ into $\{C_k\}_{k=1}^{K}$ and
Stage~2 is applied independently within each $C_k$, the guarantee
extends to every $i \in \{1,\dots,n\}$.
\end{proof}

\begin{corollary}[Cover interpretation]
Let $\mathcal{R}_k \subseteq C_k$ be the representatives selected by
Stage~2 within initial cluster $C_k$. Then $\{\mathcal{B}_\alpha(r)
\cap C_k\}_{r \in \mathcal{R}_k}$ is a valid cover of $C_k$, and
$\mathcal{R} = \bigcup_{k=1}^{K} \mathcal{R}_k$ induces the final
cluster assignment. The final number of clusters is exactly
$|\mathcal{R}| = \sum_{k=1}^{K} |\mathcal{R}_k|$.
\end{corollary}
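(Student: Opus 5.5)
The corollary is essentially a restatement of Theorem~\ref{thm:alg1} in set-cover language, so the plan is to read off each claim from the proof of that theorem rather than argue anew. Fix an initial cluster $C_k$ and let $T_k$ be the number of iterations of the while loop in Stage~2 on $C_k$, so that $\mathcal{R}_k = \{r^{*(1)},\dots,r^{*(T_k)}\}$. The proof of Theorem~\ref{thm:alg1} gives three facts we will use: $\mathcal{M}_{r^{*(t)}}^{(t)} = \mathcal{B}_\alpha(r^{*(t)}) \cap \mathcal{U}^{(t)}$; the sets $\mathcal{M}_{r^{*(t)}}^{(t)}$ are pairwise disjoint because each is removed from $\mathcal{U}$ immediately; and the loop terminates with $\mathcal{U}^{(T_k+1)} = \emptyset$.

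\emph{Cover property.} From $\mathcal{U}^{(1)} = C_k$ and $\mathcal{U}^{(t+1)} = \mathcal{U}^{(t)} \setminus \mathcal{M}_{r^{*(t)}}^{(t)}$, induction gives $C_k = \bigsqcup_{t=1}^{T_k} \mathcal{M}_{r^{*(t)}}^{(t)}$. Each piece satisfies $\mathcal{M}_{r^{*(t)}}^{(t)} \subseteq \mathcal{B}_\alpha(r^{*(t)}) \cap C_k$ (since $\mathcal{U}^{(t)} \subseteq C_k$), so $C_k \subseteq \bigcup_{r \in \mathcal{R}_k} (\mathcal{B}_\alpha(r)\cap C_k) \subseteq C_k$, i.e.\ the family is a valid cover, and in particular each member of it is drawn from $\mathcal{F}_k$ since $\mathcal{R}_k \subseteq C_k$.

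\emph{Induced assignment and count.} The disjoint decomposition above shows that line~14 assigns each $i \in C_k$ exactly once, to the unique $r^{*(t)}$ with $i \in \mathcal{M}_{r^{*(t)}}^{(t)}$; hence $\mathcal{R}_k$ together with this partition determines $\hat f_{\text{cluster}}$ on $C_k$, and since the $C_k$ partition $\{1,\dots,n\}$, $\mathcal{R}$ determines the full assignment. For the count, I need that the representatives are distinct and each is a value of $\hat f_{\text{cluster}}$. By reflexivity $r^{*(t)} \in \mathcal{M}_{r^{*(t)}}^{(t)}$ (this is the one step needing care: the argmax ranges over all of $C_k$, so I must note that if $r^{*(t)}$ were already matched, the reflexivity argument in Theorem~\ref{thm:alg1} still yields a nonempty $\mathcal{M}$, but distinctness follows from disjointness only if $r^{*(t)}\in\mathcal{U}^{(t)}$; I would handle this by observing that a previously chosen $r^{*(s)}$ has $\mathcal{M}^{(t)}_{r^{*(s)}}=\emptyset$ for $t>s$, so it cannot maximize a positive count, and the count is positive because any $u\in\mathcal{U}^{(t)}$ has $|\mathcal{M}_u^{(t)}|\ge 1$). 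Thus the $r^{*(t)}$ are pairwise distinct, each cluster is nonempty, $|\mathcal{R}_k| = T_k$ equals the number of final clusters in $C_k$, and since $\mathcal{R}_k \subseteq C_k$ with the $C_k$ disjoint, $|\mathcal{R}| = \sum_k |\mathcal{R}_k|$.
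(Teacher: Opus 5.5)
Your proposal is correct and matches how the paper obtains the corollary, which it treats as an immediate consequence of the proof of Theorem~\ref{thm:alg1}: each $i \in C_k$ is assigned exactly once to some $r \in \mathcal{R}_k$ with $i \in \mathcal{B}_\alpha(r)$, and $\mathcal{R}_k \subseteq C_k$ with the $C_k$ disjoint gives $|\mathcal{R}| = \sum_k |\mathcal{R}_k|$. Your extra care about line~12 maximizing over all of $C_k$ rather than $\mathcal{U}^{(t)}$ is well placed, because the paper's step ``reflexivity gives $r^{*(t)} \in \mathcal{M}^{(t)}_{r^{*(t)}}$'' fails whenever an already-matched customer is selected; your repair (the maximum count is positive because any $u \in \mathcal{U}^{(t)}$ has $u \in \mathcal{M}^{(t)}_u$, and $\mathcal{M}^{(t)}_{r^{*(s)}} = \emptyset$ for $s < t$) correctly gives termination, nonempty clusters and distinct representatives, so you should drop your opening assertion $r^{*(t)} \in \mathcal{M}^{(t)}_{r^{*(t)}}$, which your argument neither establishes nor needs.
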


\begin{remark}[Role of the initial clustering]
Theorem~\ref{thm:alg1} holds for \emph{any} partition
$\{C_k\}_{k=1}^{K}$ used in Stage~1, including the trivial partition
$K=1$. The choice of initial clustering affects only the number of
final clusters and the runtime/memory,
not the correctness of the guardrails. This is consistent with the
empirical robustness reported in Table \ref{tab:robustness}.
\end{remark}

\subsection{Correctness of Algorithm~\ref{algo2} (Reassignment Variant)}

\begin{theorem}[Guardrail guarantee for Algorithm~\ref{algo2}]
\label{thm:alg2}
Under the assumptions of Theorem~\ref{thm:alg1}, the assignment
$\hat{f}_{\text{cluster}}$ returned after applying
Algorithm~\ref{algo2} to the output of Algorithm~\ref{algo1}
also satisfies the guardrail property \eqref{eq:guardrail}. Moreover,
for every customer $i$, the post-reassignment similarity is at least as
large as the original:
$$
\fsim\!\left(E_i,\, E_{\hat{f}_{\mathrm{cluster}}^{\mathrm{new}}(P_i, A_i)}\right)
\;\ge\;
\fsim\!\left(E_i,\, E_{\hat{f}_{\mathrm{cluster}}^{\mathrm{old}}(P_i, A_i)}\right).
$$
where $\hat{f}_{\text{cluster}}^{\mathrm{old}}$ and
$\hat{f}_{\text{cluster}}^{\mathrm{new}}$ denote the assignments before
and after reassignment, respectively.
\end{theorem}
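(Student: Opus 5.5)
The plan is to reduce everything to Theorem~\ref{thm:alg1} and to the symmetry of $\sim_\alpha$. We work one initial cluster $C_k$ at a time, with representative set $\mathcal{R}_k$ and matrices $S, M$ as computed in Algorithm~\ref{algo1}. Algorithm~\ref{algo2} uses no information from outside $C_k$. Because Stage~1 partitions $\{1,\dots,n\}$, establishing both claims for an arbitrary $i \in C_k$ suffices.

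The first step is to show that the $\arg\max$ in line~2 of Algorithm~\ref{algo2} is taken over a nonempty set that contains the old representative. Let $r^{\mathrm{old}} = \hat{f}^{\mathrm{old}}_{\text{cluster}}(P_i, A_i)$. By Theorem~\ref{thm:alg1}, $r^{\mathrm{old}} \in \mathcal{R}_k$ and $i \in \mathcal{B}_\alpha(r^{\mathrm{old}})$, so $r^{\mathrm{old}} \sim_\alpha i$. By symmetry of $\fsim$ and of attribute equality, $i \sim_\alpha r^{\mathrm{old}}$. Unwinding the definition in line~7 of Algorithm~\ref{algo1}, this means $M_{i r^{\mathrm{old}}} = 1$. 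Hence $r^{\mathrm{old}} \in \{ r \in \mathcal{R}_k : M_{ir} = 1 \}$. In particular, $r^*$ exists, and the reassignment in line~4 always executes; the branch where $r^*$ fails to exist never occurs.

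The second step verifies the guardrail for the new assignment. The selected $r^{\mathrm{new}} = r^*$ satisfies $r^{\mathrm{new}} \in \mathcal{R}_k \subseteq \mathcal{R}$ and $M_{i r^{\mathrm{new}}} = 1$. The latter gives $S_{i r^{\mathrm{new}}} \ge \alpha$ and $A_i = A_{r^{\mathrm{new}}}$, i.e.\ $i \sim_\alpha r^{\mathrm{new}}$, and by symmetry again $i \in \mathcal{B}_\alpha(r^{\mathrm{new}})$. This is exactly \eqref{eq:guardrail}.

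The final step is monotonicity. Since $r^{\mathrm{new}}$ maximizes $S_{ir} = \fsim(E_i, E_r)$ over a feasible set containing $r^{\mathrm{old}}$, we get $\fsim(E_i, E_{r^{\mathrm{new}}}) \ge \fsim(E_i, E_{r^{\mathrm{old}}})$.

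One subtlety needs a brief remark. Reassignment is computed with $\mathcal{R}_k$ held fixed and depends only on $i$, so the order in which customers are processed does not matter. A representative $r$ may itself be reassigned to another representative $r'$ with $S_{rr'} \ge S_{rr}$. This is harmless for \eqref{eq:guardrail}, because the property only requires assigned targets to lie in $\mathcal{R}$, which is unchanged. Apart from this remark, the argument is routine bookkeeping. The only step needing care is the first one, ensuring feasibility of the $\arg\max$, and it is handled by invoking Theorem~\ref{thm:alg1} together with the symmetry of $\sim_\alpha$.
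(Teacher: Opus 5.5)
Your proof is correct and follows essentially the same route as the paper. Both invoke Theorem~\ref{thm:alg1} together with the symmetry of $\sim_\alpha$ to place $r^{\mathrm{old}}$ in the feasible set of the $\arg\max$ in Algorithm~\ref{algo2}, and that single fact gives existence of $r^{*}$, the guardrail for $r^{\mathrm{new}}$, and monotonicity; the only differences are that you prove feasibility first (skipping the paper's redundant empty-set branch) and you spell out the symmetry step from $M_{ir^{\mathrm{new}}}=1$ to $i \in \mathcal{B}_\alpha(r^{\mathrm{new}})$.
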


\begin{proof}
Fix an initial cluster $C_k$ and let
$\mathcal{R}_k \subseteq C_k$ be the representatives produced by
Algorithm~\ref{algo1}. For each $i \in C_k$,
Algorithm~\ref{algo2} (line~2) defines
$$
\begin{aligned}
r^* \;&=\; \mathrm{argmax}_{r \in \mathcal{R}_k \,:\, M_{ir} = 1} S_{ir} \\
    \;&=\; \mathrm{argmax}_{r \in \mathcal{R}_k \,\cap\, \mathcal{B}_\alpha(i)} \fsim(E_i, E_r).
\end{aligned}
$$
The reassignment is performed only if the constraint set
$\mathcal{R}_k \cap \mathcal{B}_\alpha(i)$ is nonempty (line~3), in
which case $r^* \in \mathcal{B}_\alpha(i)$ by definition, so the
guardrail property \eqref{eq:guardrail} is preserved.

If $\mathcal{R}_k \cap \mathcal{B}_\alpha(i)$ is empty, the assignment
is left unchanged; by Theorem~\ref{thm:alg1} the original assignment
already satisfied the guardrail, so it still does.

However, by Theorem~\ref{thm:alg1} the original representative
$r^{\mathrm{old}} = \hat{f}_{\text{cluster}}^{\mathrm{old}}(P_i, A_i)$
belongs to $\mathcal{R}_k$ and satisfies $i \in \mathcal{B}_\alpha(r^{\mathrm{old}})$,
equivalently $r^{\mathrm{old}} \in \mathcal{B}_\alpha(i)$ by symmetry
of $\sim_\alpha$. Hence $\mathcal{R}_k \cap \mathcal{B}_\alpha(i)$ is
\emph{always} nonempty, and the reassignment always executes. Moreover,
since $r^{\mathrm{old}}$ is a feasible candidate in the $\arg\max$,
$$
f_{\text{sim}}(E_i, E_{r^*}) \;\ge\; f_{\text{sim}}(E_i, E_{r^{\mathrm{old}}}),
$$
establishing the monotonicity claim.
\end{proof}

\begin{remark}[Representative set is preserved]
Algorithm~\ref{algo2} does not modify $\mathcal{R}_k$; it only
redistributes the non-representative customers within $C_k$ among the
existing representatives. Therefore the final number of clusters
$|\mathcal{R}|$ is identical for both variants, while the within-cluster
similarities can only weakly increase. This explains the empirical
observation in Table~\ref{tab:sim} that SGC-R
achieves higher average similarity at the same minimal similarity
$\alpha$ and the same number of clusters as SGC.
\end{remark}

\begin{remark}[Trade-off with tail-cluster trimming]
While reassignment preserves the guardrail and improves average
similarity, it redistributes mass from the largest (earliest-selected)
clusters to smaller ones, flattening the cluster-size distribution.
Tail-cluster trimming (Section~\ref{sec:cluster_size}) therefore covers fewer customers
per retained cluster under Algorithm~\ref{algo2}, consistent
with Figures~\ref{fig:size} and \ref{fig:size2}.
\end{remark}

\subsection{Relation to the Minimum Set Cover Problem}

The greedy step in Stage~2 is the classical
Johnson--Chv\'{a}tal greedy heuristic for \textsc{Set Cover}
\citep{johnson1974setcover,chvatal1979setcover}. Let
$\mathrm{OPT}_k$ denote the minimum number of $\alpha$-balls needed to
cover $C_k$. Then the greedy procedure yields
$$
|\mathcal{R}_k| \;\le\; \mathrm{OPT}_k \cdot \bigl(1 + \ln |C_k|\bigr),
$$
so the total number of final clusters satisfies
$$
|\mathcal{R}| \;=\; \sum_{k=1}^{K} |\mathcal{R}_k|
\;\le\; \bigl(1 + \ln \max_k |C_k|\bigr) \sum_{k=1}^{K} \mathrm{OPT}_k.
$$
This provides a formal ceiling on the data reduction loss incurred by
the greedy representative selection, relative to the (NP-hard)
optimum cover within each initial cluster. We emphasize that
this bound concerns data-reduction efficiency, not correctness:
the guardrail property \eqref{eq:guardrail} holds exactly, not
approximately.

\begin{remark}[Partition-constrained vs.\ global optimum]
The bound above compares $|\mathcal{R}|$ to the optimum
$\mathcal{R}^*_{\text{partition}}$ subject to the Stage~1 partition
$\{C_k\}$. The unconstrained optimum $\mathcal{R}^*_{\text{global}}$
(ignoring partition boundaries) can be strictly smaller, since two
similar customers split across different $C_k$ cannot share a
representative in our algorithm. The gap $|\mathcal{R}^*_{\text{partition}}|
- |\mathcal{R}^*_{\text{global}}|$ is controlled by the quality of the
initial clustering; in our setting Mini-batch K-Means is empirically
sufficient to keep this gap small (Table \ref{tab:robustness} in Appendix \ref{app:robust}).
\end{remark}

\section{Computational Complexity Analysis}
\label{app:complexity}

We analyze the time and memory complexity of
Algorithm~\ref{algo1} and Algorithm~\ref{algo2} as a function
of the dataset size $n$, the embedding dimension $d$, the number of
initial clusters $K$, and the attribute-vector dimension $a$. Let
$n_k = |C_k|$ denote the size of the $k$-th initial cluster, with
$\sum_{k=1}^{K} n_k = n$.

\subsection{Stage~1: Initial Clustering with Mini-Batch K-Means}

Let $b$ be the Mini-batch K-Means batch size and $T_1$ the number of
iterations. A standard Mini-batch K-Means update evaluates $b$ points
against $K$ centroids in $\mathbb{R}^d$ and performs $b$ centroid
updates per iteration, yielding
$$
T_{\text{Stage 1}} \;=\; O\bigl(T_1 \, b \, K \, d\bigr),
\qquad
M_{\text{Stage 1}} \;=\; O\bigl((n + K)\, d\bigr),
$$
where the memory term accounts for storing the $n$ embeddings and $K$
centroids. Since $b$ and $T_1$ are user-specified constants independent
of $n$, Stage~1 is effectively linear in $n$: $O(n \, d)$ memory and
(amortized) $O(K \, d)$ time per mini-batch update. This is consistent
with \citet{sculley2010minibatchkmeans}.

\subsection{Stage~2: Pairwise Similarity, Match Matrix, and Greedy Selection}

Stage~2 is executed independently within each initial cluster $C_k$.
Within $C_k$ we perform the following substeps.

\paragraph{(a) Pairwise similarity matrix.}
Computing $S_{ij} = f_{\text{sim}}(E_i, E_j)$ for all
$i, j \in C_k$ costs $O(n_k^2 \, d)$ time and $O(n_k^2)$ memory.

\paragraph{(b) Attribute match matrix.}
Computing $M_{ij} = \mathbf{1}\{S_{ij} \ge \alpha \text{ and } A_i = A_j\}$
costs $O(n_k^2 \, a)$ time and $O(n_k^2)$ memory (assuming $a$-dimensional
categorical attributes compared in $O(a)$). In practice one can
precompute attribute group IDs in $O(n_k \, a)$ and reduce the pairwise
attribute check to an $O(1)$ equality test, giving $O(n_k^2)$ total.

\paragraph{(c) Iterative greedy representative selection.}
Let $T_k$ be the number of greedy iterations within $C_k$ (i.e.,
$T_k = |\mathcal{R}_k|$). In each iteration the algorithm must find
$
r^* = \arg\max_{i \in C_k} |\mathcal{M}_i^{(t)}|,
$
where $\mathcal{M}_i^{(t)} = \{j \in \mathcal{U}^{(t)} : M_{ij}=1\}$.
A straightforward implementation maintains a row-sum vector of $M$
restricted to $\mathcal{U}^{(t)}$:
\begin{itemize}
  \item Initialization of row sums: $O(n_k^2)$.
  \item Per iteration: select $r^*$ in $O(n_k)$, then update row sums
  by subtracting the columns of $M$ indexed by
  $\mathcal{M}_{r^*}^{(t)}$, costing
  $O(|\mathcal{M}_{r^*}^{(t)}| \cdot n_k)$.
\end{itemize}
Since the sets $\mathcal{M}_{r^{*(t)}}^{(t)}$ partition $C_k$,
$\sum_t |\mathcal{M}_{r^{*(t)}}^{(t)}| = n_k$, and the total cost of
all updates across all $T_k$ iterations is $O(n_k^2)$. Hence the
greedy loop runs in $O(n_k^2)$ time.

\paragraph{Per-cluster Stage 2 complexity.}
Combining (a)--(c):
\begin{align*}
T_{\text{Stage 2}}(C_k) &\;=\; O\bigl(n_k^2 \, d + n_k^2\bigr)
\;=\; O(n_k^2 \, d), \\
M_{\text{Stage 2}}(C_k) &\;=\; O(n_k^2).
\end{align*}

\paragraph{Aggregate Stage~2 complexity.}
Summing over initial clusters,
$$
T_{\text{Stage 2}} \;=\; O\!\left(d \sum_{k=1}^{K} n_k^2\right),
\quad
M_{\text{Stage 2}} \;=\; O\!\left(\max_{k} n_k^2\right),
$$
assuming Stage~2 is processed one initial cluster at a time (so that
only one pairwise matrix is held in memory). If initial clusters are
roughly balanced with $n_k \approx n/K$,
$$
T_{\text{Stage 2}} \;=\; O\!\left(\tfrac{n^2 d}{K}\right),
\qquad
M_{\text{Stage 2}} \;=\; O\!\left(\tfrac{n^2}{K^2}\right).
$$
This formalizes the role of $K$: increasing $K$ decreases Stage~2 time
\emph{linearly} and peak memory \emph{quadratically}, at the cost of
slightly reduced data-reduction efficiency (Appendix \ref{app:robust}, Table \ref{tab:robustness}).

\subsection{Stage~2b: Reassignment Step (Algorithm~\ref{algo2})}

Within initial cluster $C_k$ with representatives $\mathcal{R}_k$, the
reassignment loop (Algorithm~\ref{algo2}) examines, for each
$i \in C_k$, only the columns of $S$ indexed by $\mathcal{R}_k$. The
cost is $O(n_k \, |\mathcal{R}_k|)$ per cluster and
$$
T_{\text{Reassign}} \;=\; O\!\left(\sum_{k=1}^{K} n_k \, |\mathcal{R}_k|\right)
\;\le\; O\!\left(\sum_{k=1}^{K} n_k^2\right),
$$
which is dominated by the pairwise similarity computation and does not
increase the asymptotic complexity. No additional pairwise storage is
needed because $S$ and $M$ have already been materialized in Stage~2.
This matches the empirical observation that the reassignment variant
has essentially identical runtime to the base algorithm (Table~\ref{tab:sim},
``Time'' column).

\subsection{Overall Complexity}

Combining Stage~1 and Stage~2,
\begin{align*}
T_{\text{total}} &\;=\; O\!\left( T_1 \, b \, K \, d \;+\; d \sum_{k=1}^{K} n_k^2 \right), \\
M_{\text{total}} &\;=\; O\!\left( n \, d \;+\; \max_{k} n_k^2 \right).
\end{align*}
For roughly balanced initial clusters with $n_k \approx n/K$,
$$
T_{\text{total}} \;=\; O\!\left(n d + \tfrac{n^2 d}{K}\right),
\qquad
M_{\text{total}} \;=\; O\!\left(n d + \tfrac{n^2}{K^2}\right).
$$
In the regime $K = \Theta(n)$ (i.e., $n_k = O(1)$), Stage~2 becomes
$O(n \, d)$, matching Stage~1, and the overall algorithm is linear in
$n$.

\subsection{Comparison with Baselines}
\label{app:comp-comp}

For reference, the baseline methods compared in Section \ref{sec:sim} have the
following standard complexities on $n$ points in $\mathbb{R}^d$:
\begin{itemize}
  \item \textbf{K-Means} (Lloyd): $O(n K d T)$ time, $O(n d)$ memory.
  \item \textbf{Mini-batch K-Means}: $O(b K d T)$ time, $O(n d)$ memory.
  \item \textbf{Agglomerative (Ward)}: $O(n^2 d)$ time, $O(n^2)$
  memory (storing the full distance matrix or heap).
  \item \textbf{BIRCH}: $O(n d)$ amortized time, but with large
  constants.
  \item \textbf{Spectral Clustering}: $O(n^3)$ time in the dense case,
  with $O(n^2)$ memory for the affinity matrix.
  \item \textbf{Gaussian Mixture (EM)}: $O(n K d^2 T)$ time, $O(n d +
  K d^2)$ memory.
\end{itemize}
The threshold-based methods most related to ours (Appendix
\ref{app:threshold-comparison}) share the same bottleneck, as each computes the
full pairwise similarity matrix over all $n$ samples:
\begin{itemize}
  \item \textbf{Star Clustering}: $O(n^2 d)$ time to compute the pairwise
  similarity matrix and $O(n^2)$ memory; the star-cover step is then linear in
  the graph size.
  \item \textbf{SimClus}: $O(n^2 d)$ time and $O(n^2)$ memory to compute the
  similarity matrix, plus greedy set-cover selection over all $n$ samples. In
  practice this global selection makes it the slowest and most memory-intensive
  method we benchmark (Section \ref{sec:sim}).
  \item \textbf{Community Detection}: $O(n^2 d)$ time and $O(n^2)$ memory for the
  full pairwise similarity matrix. To scale, SentenceTransformers batches the data and runs
  Community Detection within each batch of size $b$, reducing cost to
  $O(n b d)$ time and $O(n b)$ memory. This batched variant is effectively a
  special case of our two-stage design in which the initial ``clustering'' is a
  random partition into fixed-size batches; because random batching separates
  similar samples more readily than Mini-batch K-Means, it tends to produce more
  final clusters (weaker data reduction) than an initial clustering that groups
  similar samples together.
\end{itemize}
None of the average-metric baselines admit a user-specified minimal within-cluster
similarity constraint, and the $O(n^2)$ memory requirement of Agglomerative,
Spectral, Star Clustering, SimClus, and (unbatched) Community Detection is
prohibitive at the $n = 38$M scale (as noted in Section~\ref{sec:exp}:
$\sim 90$\,TB for $n = 5$M). Algorithm~\ref{algo1} circumvents this by restricting
the quadratic computation to \emph{within} each initial cluster, yielding
$O(\max_k n_k^2) \ll O(n^2)$ memory for any $K \gg 1$. This is the
underlying reason for the up-to-$1000\times$ runtime and memory advantage
observed in Section~\ref{sec:sim}.

\section{Additional Experiment Results}
\label{app:add_exp}
\subsection{Cluster Size Distribution Additional Figure}
Refer to Figure \ref{fig:size2} for cluster size distribution results on Cosmopedia, complementary to Figure \ref{fig:size}. Across all three datasets, the greedy coverage-based methods (SGC and SimClus) produce the most skewed distributions, in which the largest clusters cover most samples, enabling effective tail-cluster trimming to further reduce data size and downstream computations; the average-metric baselines, which target balanced clusters, are the least skewed.

\begin{figure}[t]
  \centering
  \includegraphics[width=\columnwidth]{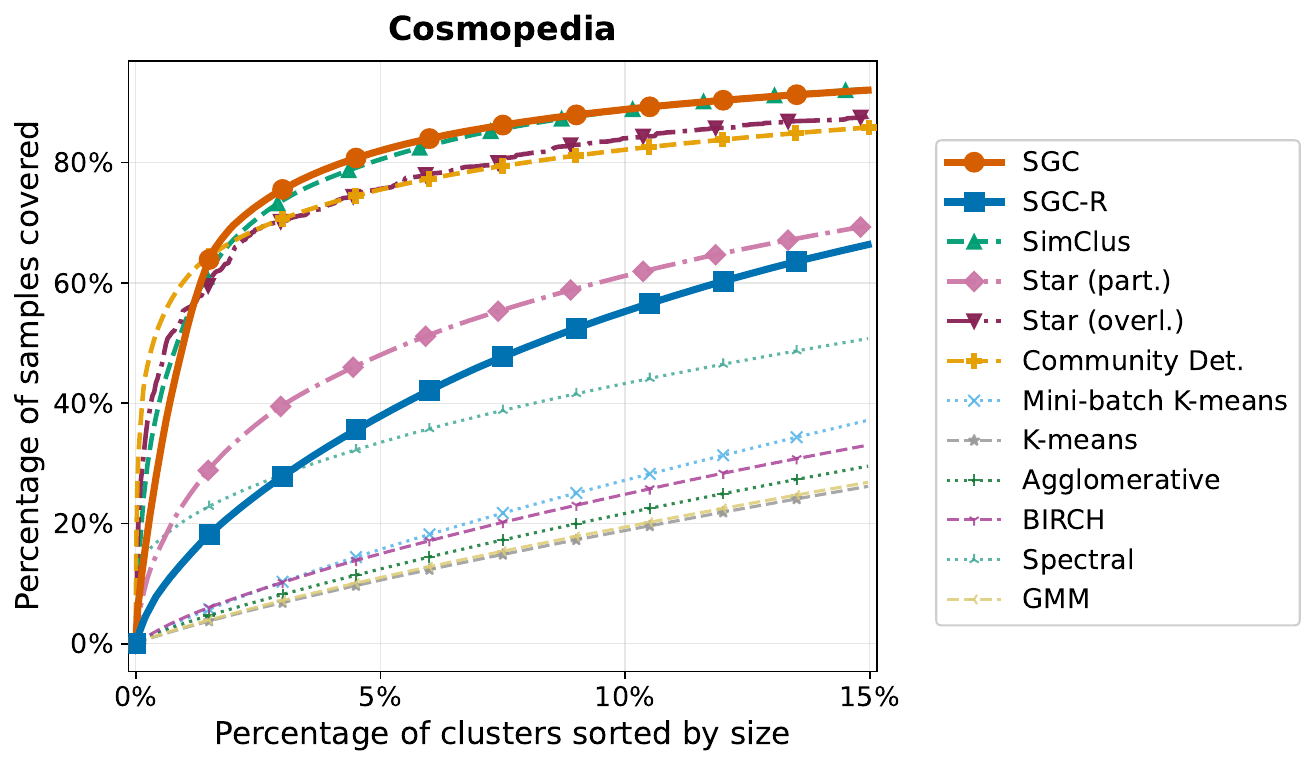}
  \caption{Sample coverage by sorted clusters on Cosmopedia.}
  \label{fig:size2}
\end{figure}

\subsection{Impact of Minimal Similarity and Household Attributes}
\label{app:dedup_fold}
There is a tradeoff between within-cluster similarity and the number of clusters. In the case of personalized recommendations, the higher the similarity, the more likely we recommend the relevant products to all customers in a cluster, at the cost of increasing the number of clusters and thus downstream computations. Similarly, the number of clusters will increase if we require matching household attributes in addition to the persona similarity, such as adult gender, adult count, child presence, child gender, and child age. These household attributes are important for personalized recommendations and not always captured in the customer shopping personas. We show how the number of clusters is affected by the minimal similarity and matching attributes required between each sample and the cluster representative in Table \ref{tab:sim-attr} on randomly selected 5 million customers of the 38 million customer dataset.

\begin{table}[h]
  \caption{Clustering performance metrics with different matching attributes and similarity thresholds on 5 million customers.}
  \label{tab:sim-attr}
  \centering
  \adjustbox{width=1.0\textwidth}{
  \begin{tabular}{llrrrr}
    \toprule
    \begin{tabular}[c]{@{}l@{}}Matching Attributes\\Required\end{tabular} &
    \begin{tabular}[c]{@{}l@{}}Minimal\\Similarity\\Required\end{tabular} &
    \begin{tabular}[c]{@{}r@{}}Number of\\Clusters\end{tabular} &
    \begin{tabular}[c]{@{}r@{}}Average\\Similarity\end{tabular} &
    \begin{tabular}[c]{@{}r@{}}Fraction of Data\\Retained\end{tabular} &
    \begin{tabular}[c]{@{}r@{}}Data Reduction\\Fold\end{tabular} \\
    \midrule
    \multirow{6}{*}{None}
    & 0.65 & 2344 & 0.794 & 0.05\% & 2133$\times$ \\
    & 0.7 & 5012 & 0.812 & 0.1\% & 998$\times$ \\
    & 0.75 & 26868 & 0.824 & 0.5\% & 186$\times$ \\
    & 0.8 & 230262 & 0.840 & 4.6\% & 22$\times$ \\
    & 0.85 & 1694033 & 0.904 & 33.9\% & 3$\times$ \\
    & 0.9 & 4695169 & 0.994 & 93.9\% & 1$\times$ \\
    \midrule
    \multirow{6}{*}{\begin{tabular}[c]{@{}l@{}}adult gender,\\adult count, child\\presence, child\\gender, child age\end{tabular}}
    & 0.65 & 44285 & 0.792 & 0.9\% & 113$\times$ \\
    & 0.7 & 49804 & 0.809 & 1.0\% & 100$\times$ \\
    & 0.75 & 97298 & 0.824 & 1.9\% & 51$\times$ \\
    & 0.8 & 435727 & 0.846 & 8.7\% & 11$\times$ \\
    & 0.85 & 2146887 & 0.916 & 42.9\% & 2$\times$ \\
    & 0.9 & 4787855 & 0.996 & 95.8\% & 1$\times$ \\
    \bottomrule
  \end{tabular}
  }
\end{table}

\subsection{Impact of Hyperparameters and Robustness}
\label{app:robust}
We perform SGC-R on the 100K Shopping Personas for different hyperparameters of Mini-batch K-Means used for the initial clustering before representative selection. Because the greedy selection and optional reassignment in the second stage help further partition the initial clusters based on desired safety/quality constraints, the proposed clustering method is robust to the hyperparameters of Mini-batch K-Means or even the choice of the clustering method used in the first stage (Table \ref{tab:robustness}).

There are two main tradeoffs in our clustering application. The first main tradeoff is between cluster quality and data reduction fold, which impacts the recommendation quality and downstream computation savings. This tradeoff is controlled by the user specified minimal similarity threshold and matching attributes shown in Table \ref{tab:sim-attr}. The second main tradeoff is between scalability (in terms of latency and memory usage) and data reduction fold. This tradeoff is controlled by the initial number of clusters, $K$. Recall that in the second stage of the proposed clustering we compute pairwise embedding similarity within each initial cluster for the greedy representative selection. This pairwise computation is $O(n^2)$ w.r.t. sample size $n$, in this case the size of each initial cluster. Thus as $K$ increases, the latency and memory requirement will both decrease, increasing scalability. In exchange, the initial clustering also introduces noise so that samples that pass the final cluster criteria might end up separated during initial clustering, reducing the efficiency of data size reduction, which worsens as $K$ increases (Table \ref{tab:robustness}). In practice we recommend choosing a $K$ as small as latency and memory requirements allow to maximize data reduction.

\begin{table}[t]
  \caption{Robustness analysis of SGC-R on internal Customer Shopping Personas across different Mini-Batch K-Means (MBKM) hyperparameters: batch size, init size, and number of initial clusters. We report average within-cluster similarity (Avg.\ Sim.), minimal within-cluster similarity (Min.\ Sim.), number of final clusters (\# Clusters), data reduction fold (Reduc.\ Fold), Stage~1 time (MBKM), and Stage~2 time (Representative Selection and Reassignment) in seconds.}
  \label{tab:robustness}
  \centering
  \adjustbox{width=1.0\textwidth}{
  \begin{tabular}{rrrrrrrrr}
    \toprule
    \begin{tabular}[c]{@{}r@{}}MBKM\\Batch Size\end{tabular}
    & \begin{tabular}[c]{@{}r@{}}MBKM\\Init Size\end{tabular}
    & \begin{tabular}[c]{@{}r@{}}Num Initial\\Clusters ($K$)\end{tabular}
    & \begin{tabular}[c]{@{}r@{}}Avg.\\Sim.\end{tabular}
    & \begin{tabular}[c]{@{}r@{}}Min.\\Sim.\end{tabular}
    & \begin{tabular}[c]{@{}r@{}}\# Final\\Clusters\end{tabular}
    & \begin{tabular}[c]{@{}r@{}}Reduc.\\Fold\end{tabular}
    & \begin{tabular}[c]{@{}r@{}}Stage 1\\Time (sec.)\end{tabular}
    & \begin{tabular}[c]{@{}r@{}}Stage 2\\Time (sec.)\end{tabular} \\
    \midrule
    1024  & 1024  & 10  & 0.826 & 0.750 & 2055 & 48$\times$ & 0.12 & 12.97 \\
    1024  & 1024  & 50  & 0.825 & 0.750 & 2539 & 39$\times$ & 0.22 & 1.48  \\
    1024  & 1024  & 250 & 0.824 & 0.750 & 2920 & 34$\times$ & 0.46 & 1.02  \\
    \midrule
    1024  & 10240 & 10  & 0.825 & 0.750 & 2029 & 49$\times$ & 0.18 & 14.59 \\
    1024  & 10240 & 50  & 0.825 & 0.750 & 2575 & 38$\times$ & 0.46 & 1.41  \\
    1024  & 10240 & 250 & 0.824 & 0.750 & 3042 & 32$\times$ & 1.57 & 0.90  \\
    \midrule
    10240 & 1024  & 10  & 0.826 & 0.750 & 2034 & 49$\times$ & 0.29 & 13.54 \\
    10240 & 1024  & 50  & 0.825 & 0.750 & 2579 & 38$\times$ & 0.29 & 1.44  \\
    10240 & 1024  & 250 & 0.824 & 0.750 & 3054 & 32$\times$ & 0.54 & 0.89  \\
    \midrule
    10240 & 10240 & 10  & 0.826 & 0.750 & 2048 & 48$\times$ & 0.21 & 14.89 \\
    10240 & 10240 & 50  & 0.825 & 0.750 & 2602 & 38$\times$ & 0.58 & 1.46  \\
    10240 & 10240 & 250 & 0.823 & 0.750 & 3104 & 32$\times$ & 1.69 & 0.86  \\
    \bottomrule
  \end{tabular}
  }
\end{table}

\section{Experimental Setup Details}
\label{app:hyp}
In the experiment on within-cluster similarities from Section~\ref{sec:sim}, we specify the minimal similarity thresholds (0.75 on internal shopping personas, 0.3 on AG News, and 0.4 on Cosmopedia) to obtain $\sim$50 fold reduction in data size based on our target use case in personalized recommendation and for making results across 3 datasets more comparable. The conclusions are consistent for other values of minimal similarity thresholds based on further experiments. We set the number of initial clusters to be 0.04\% of the sample size. Increasing this ratio will reduce the memory requirement of pair-wise similarity computation within each initial cluster at the potential cost of increasing the number of final clusters.

We select embedding models from the SentenceTransformer model zoo \citep{reimers-2019-sentence-bert} by benchmark performance, inference speed, and context-window limit. For shopping personas and AG News, whose texts are short, we use \texttt{all-\allowbreak MiniLM-\allowbreak L6-\allowbreak v2} (384-dimensional embeddings, 256-token context), which is fast and performs well across benchmarks. Cosmopedia articles are substantially longer, so we instead use \texttt{all-\allowbreak distil\-\allowbreak roberta-\allowbreak v1} (768-dimensional embeddings), which offers a larger context window (512 tokens) to avoid truncating long inputs at a modest speed cost. Any embedding model meeting these criteria can be substituted; the same model is used for the proposed method and all baselines on a given dataset for a fair comparison.

We use the default hyperparameters of the clustering methods benchmarked against as implemented in Scikit-learn \citep{scikit-learn} 1.7.2, except for Spectral Clustering whose hyperparameters are slightly adjusted for memory efficiency. The number of clusters is set to match the final number of clusters produced by SGC on each dataset to achieve the same data size reduction. Other key hyperparameters are as follows:
\begin{enumerate}
    \item \textbf{Mini-batch K-Means}: `max\_iter': 100, `batch\_size': 10240, `init\_size': 30720, `random\_state': 123.
    \item \textbf{K-Means}: `init': `k-means++', `n\_init': `auto', `max\_iter': 300, `algorithm': `lloyd'.
    \item \textbf{Agglomerative Clustering}: `metric': `euclidean', `linkage': `ward', `compute\_full\_tree': `auto'.
    \item \textbf{BIRCH}: `threshold': 0.5, `branching\_factor': 50.
    \item \textbf{Spectral Clustering}: `affinity': `nearest\_neighbors', `n\_neighbors': 10, `eigen\_solver': `lobpcg'.
    \item \textbf{Gaussian Mixture}: `covariance\_type': `full', `init\_params': `kmeans', `max\_iter': 100.
\end{enumerate}

\end{document}